\documentclass[11pt,oneside]{article}

\usepackage{palatino}
\usepackage[margin=1in]{geometry}
\usepackage{color,xcolor,soul}
\usepackage[abspage,user,savepos]{zref}
\usepackage[colorlinks, citecolor=blue]{hyperref}  
\usepackage{natbib}
\usepackage{fp}
\usepackage{framed}
\usepackage{cuted,balance}
\usepackage{setspace}
\usepackage{subcaption}
\usepackage{dingbat}
\usepackage{verbatim}
\usepackage{enumerate}
\usepackage{graphicx} 
\usepackage{epsfig} 
\usepackage{mathtools}
\usepackage{dsfont,amsmath} 
\usepackage{amssymb}  
\usepackage{authblk}
\usepackage{algorithm,algorithmic}
\usepackage{xcolor}
\usepackage{wasysym}
\usepackage{tikz}
\usetikzlibrary{shapes}
\usepackage{hyperref,cleveref}       
\usepackage{overpic}
\usepackage{amsopn}

\newcounter{definition}
\newenvironment{definition}[1]{\refstepcounter{definition}\par\medskip
\noindent 
\textbf{Definition \thedefinition~(#1)} \em \rmfamily}
{\medskip}
\newenvironment{definition*}{\refstepcounter{definition}\par\medskip
\noindent 
\textbf{Definition \thedefinition.} \em \rmfamily}
{\medskip}

\newcounter{theorem}
\newenvironment{theorem}[1]{\refstepcounter{theorem}\par\medskip
\noindent 
\textbf{Theorem \thetheorem~(#1)} \em \rmfamily}
{\medskip}
\newenvironment{theorem*}{\refstepcounter{theorem}\par\medskip
\noindent 
\textbf{Theorem \thetheorem.} \em \rmfamily}
{\medskip}

\newcounter{assumption}
\newenvironment{assumption}[1]{\refstepcounter{assumption}\par\medskip
\noindent 
\textbf{Assumption \theassumption~(#1)} \em \rmfamily}
{\medskip}
\newenvironment{assumption*}{\refstepcounter{assumption}\par\medskip
\noindent 
\textbf{Assumption \theassumption.} \em \rmfamily}
{\medskip}

\newcounter{proposition}
{\medskip}
\newenvironment{proposition*}{\refstepcounter{proposition}\par\medskip
\noindent 
\textbf{Proposition \theproposition.} \em \rmfamily}
{\medskip}

\newenvironment{proof}{
\indent \textit{Proof.} \rmfamily}{\hfill $\square$}

\crefname{assumption}{Assumption}{Assumptions}

\DeclareMathOperator*{\argmax}{arg\,max}

\newcommand{\Markov}{\mathcal{M}}
\newcommand{\Markoveps}{\mathcal{M}_\epsilon}
\newcommand{\State}{\mathcal{S}}
\newcommand{\Action}{\mathcal{A}}

\newcommand{\Reward}{\mathsf{R}}
\newcommand{\Rewardeps}{\mathsf{R}_\epsilon}
\newcommand{\peps}{p_\epsilon}
\newcommand{\opeps}{\overline{p}_\epsilon}
\newcommand{\reward}{\overline{r}}
\newcommand{\rewardeps}{\reward_\epsilon}
\newcommand{\explore}{\mathcal{E}}
\newcommand{\Ueps}{U_\epsilon}

\newcommand{\Real}{\mathbb{R}}
\newcommand{\E}{\mathrm{E}}
\newcommand{\ind}[1]{\mathbb{I}_{\{#1\}}}
\newcommand{\BR}{\mathrm{br}_{\epsilon}}  
\newcommand{\BRfree}{\mathrm{br}}

\newcommand{\hatv}{\hat{v}}
\newcommand{\bary}{\overline{y}}
\newcommand{\Gap}{\mathrm{NG}}

\newcommand{\uY}{x}
\newcommand{\oY}{\overline{x}}

\newcommand{\oDelta}{\overline{\Delta}}
\newcommand{\uDelta}{\underline{\Delta}}

\newcommand{\hatpi}{\hat{\pi}}
\newcommand{\hatq}{\hat{q}}
\newcommand{\hatQ}{\hat{Q}}

\newcommand{\tlambda}{\tilde{\lambda}}

\newcommand{\be}{\begin{equation}}
\newcommand{\ee}{\end{equation}}
\newcommand{\nn}{\nonumber}

\newcommand{\vvec}{\mathbf{v}}
\newcommand{\Pvec}{\mathbf{P}}
\newcommand{\rvec}{\mathbf{r}}

\begin{document}

\date{}

\title{\LARGE \bf
Actor-Dual-Critic Dynamics for Zero-sum and Identical-Interest Stochastic Games}

\author{Ahmed Said Donmez, Yuksel Arslantas and Muhammed O. Sayin
\thanks{A. S. Donmez, Y. Arslantas and M. O. Sayin are with the Department of Electrical \& Electronics Engineering at Bilkent University, Ankara, T\"{u}rkiye 06800. 
(Emails:  { \tt\small said.donmez@bilkent.edu.tr}, {\tt\small yuksel.arslantas@bilkent.edu.tr},  { \tt\small sayin@ee.bilkent.edu.tr})}%
}

\maketitle

\bigskip

\begin{center}
\textbf{Abstract}
\end{center}

We propose a novel independent and payoff-based learning framework for stochastic games that is model-free, game-agnostic, and gradient-free. The learning dynamics follow a best-response-type actor-critic architecture, where agents update their strategies (actors) using feedback from \textit{two} distinct critics: a fast critic that intuitively responds to observed payoffs under limited information, and a slow critic that deliberatively approximates the solution to the underlying dynamic programming problem. Crucially, the learning process relies on non-equilibrium adaptation through smoothed best responses to observed payoffs. We establish convergence to (approximate) equilibria in two-agent zero-sum and multi-agent identical-interest stochastic games over an infinite horizon. This provides one of the first payoff-based and fully decentralized learning algorithms with theoretical guarantees in both settings. Empirical results further validate the robustness and effectiveness of the proposed approach across both classes of games.

\begin{spacing}{1.245}

\section{Introduction}

Reinforcement learning (RL) has achieved remarkable success in domains such as board games \citep{ref:Silver17}, autonomous driving \citep{ref:Shwartz16}, city-scale navigation \citep{ref:Mirowski18}, and robotic control \citep{ref:Gu23}. Many of these applications inherently involve multiple interacting agents, underscoring the importance of \emph{multi-agent reinforcement learning} (MARL) as a framework for sequential decision-making in shared environments \citep{ref:Zhang21a}. Despite its empirical progress, MARL still relies heavily on heuristics and parameter tuning, often lacking strong theoretical guarantees.

Game theory provides a principled foundation for analyzing strategic interactions among intelligent agents. When combined with sequential decision-making, it gives rise to \emph{stochastic games} (or Markov games), a natural and expressive model for MARL \citep{ref:Shapley53,ref:Zhang21a}. However, stochastic games inherit and amplify the distinct challenges of both reinforcement learning and multi-agent interaction. Properties that hold in single-agent MDPs or in repeated matrix games may fail to extend to stochastic games, complicating both analysis and algorithm design. Recent work has sought to overcome these challenges and provide theoretical insight into algorithmic learning in stochastic games \citep{ref:Daskalakis20,ref:Sayin21,ref:Daskalakis23,ref:Ding22,ref:Jin22}. 

\subsection{Contributions}
In this paper, we focus on decentralized learning settings where agents update independently, without access to opponent actions, or environment models. Such minimal-information scenarios are highly relevant in practice yet challenging. Our goal is to develop learning algorithms that operate under these constraints while retaining convergence guarantees. This leads us to the question:
\begin{center}
\emph{Can independent agents, relying solely on payoff feedback and non-equilibrium adaptation,
converge to equilibrium in key classes of stochastic games?}
\end{center}
We provide an \textit{affirmative} answer for two benchmark cases: zero-sum and identical-interest stochastic games. 

To address the challenge of decentralized learning in stochastic games, we introduce a cognitively inspired, payoff-based actor-critic framework. Our key contributions are as follows:

\textit{A novel actor-dual-critic architecture:} We propose a new learning design where each agent independently maintains two critics: a \emph{fast critic}, which rapidly estimates local rewards and transitions based on local observations, and a \emph{slow critic}, which solves a fixed-point equation to approximate long-term values. Motivated by dual-process theories of human cognition \citep{ref:Kahneman11}, blending fast, reactive intuition with slow, deliberate planning, this dual decomposition addresses the challenge of non-stationarity in MARL: as each agent’s strategy evolves, it alters the environment faced by others. Treating this environment as stationary, as in single-agent RL, can lead to instability. Our architecture separates short-term reactions from long-term stabilization, allowing for more robust adaptation to such non-stationary issues.

\textit{Gradient-free, best-response-type strategy updates:} Our actor improves its strategy using \textit{best-response-type} updates rather than relying on policy gradients. This avoids the limitations of gradient-based actor-critic methods \citep{ref:Konda99}, which often converge only to local optima. Our approach is \textit{gradient-free}, enabling simple and efficient updates, \textit{game-agnostic}, avoiding assumptions about structure, and \textit{provably convergent} to equilibria in structured game classes.
Furthermore, it aligns with theoretical arguments that equilibrium behavior can emerge from non-equilibrium adaptation processes \citep{ref:Fudenberg09}. Our analysis demonstrates that even under minimal information, such payoff-driven best-response-type dynamics can robustly converge to equilibrium—thereby reinforcing the predictive power of equilibrium in stochastic games.

\textit{Convergence to equilibrium in stochastic games:} We prove convergence to approximate Nash equilibria in two foundational settings: two-agent zero-sum stochastic games, and multi-agent identical-interest stochastic games. The approximation error is proportional to the exploration rate $\epsilon$. Convergence in identical-interest games is especially challenging. Unlike the zero-sum case, these games may lack unique game values for different equilibria, and value iteration may not have the contraction property. Our analysis leverages and extends recent advances in \textit{quasi-monotonicity} techniques \citep{ref:Baudin22,ref:Sayin22b} to ensure convergence in both settings under fully independent, payoff-based dynamics.


\subsection{Related Work}
Algorithmic learning in stochastic games has recently received significant attention due to their applications on MARL \citep{ref:Ozdaglar21}. For example, \cite{ref:Leslie20} presented a continuous-time generalization of fictitious play and best response dynamics to two-agent zero-sum stochastic games. \cite{ref:Sayin22} introduced uncoupled discrete-time fictitious-play dynamics in which each agent independently estimates continuation values based on their own beliefs. \cite{ref:Baudin22} showed the convergence of \cite{ref:Leslie20}'s dynamics also for identical-interest stochastic games. \cite{ref:Sayin22b} showed the convergence of \cite{ref:Sayin22}'s dynamics also for identical-interest stochastic games with single controllers. Notably, \cite{ref:Sayin21} presented independent and payoff-based dynamics generalizing \cite{ref:Leslie05}'s individual Q-learning to stochastic games with convergence guarantees for zero-sum games. \cite{ref:Chen23} presented another independent and payoff-based dynamics generalizing \cite{ref:Leslie06}'s actor-critic dynamics to stochastic games with non-asymptotic convergence guarantees for zero-sum games. 

Several decentralized actor-critic methods have also been proposed under structural assumptions. 
\cite{ref:Arslan17} introduced decentralized Q-learning for pure-strategy equilibria, transforming stochastic games into a strategic-form game over Markov stationary deterministic strategies. While convergence is shown for weakly acyclic games, the restriction to pure strategies limits generality—particularly in zero-sum games where no pure-strategy equilibrium may exist. \cite{ref:Perolat18} developed actor-critic learning for a class of multi-stage games, where each state is visited only once. In the zero-sum setting, \cite{ref:Daskalakis20} studied independent policy gradient methods with two-timescale step sizes across agents, achieving non-asymptotic guarantees. Our actor-dual-critic dynamics differ from these works with provable convergence in both two-agent zero-sum and multi-agent identical-interest stochastic games with no access to opponent actions, game model, and dynamics. 

On the other hand, for Markov potential games, which generalize identical-interest games, convergence of independent \textit{policy gradient dynamics} has been established in several works \citep{ref:Leonardos21,ref:ZhangR24,ref:Fox22,ref:Ding22}.  By contrast, general—especially zero-sum—stochastic games lack such a Lyapunov structure, so researchers resort to coupled-Lyapunov or extra-gradient updates that still require cross-agent sampling coordination \citep{ref:Wei21, ref:Chen22}. Finite-time convergence is achieved by optimistic gradient descent-ascent and related extra-gradient variants, yet only under strong assumptions like state-space irreducibility or double-timescale updates that limit practical deployment \citep{ref:Wei21, ref:Zhao22}. Notably, \cite{ref:Ding22} shows the convergence of independent policy gradient for both zero-sum stochastic games and Markov potential games. 

The literature on learning in general-sum stochastic games mostly focus on sample efficient algorithms with regret guarantees that finds coarse correlated equilibria \citep{ref:Jin22,ref:Liu21,ref:Bai20,ref:Song21,ref:Mao23,ref:Tian21} since finding a Nash equilibria is a PPAD-complete task \citep{ref:Daskalakis23}. \cite{ref:Cai23} showed a weaker concept of convergence, path convergence, on general stochastic games. On the other hand, \cite{ref:Maheshwari24} introduced Markov Near-Potential Function idea to approximate general-sum games with potential functions and shows certain bounds on closeness to Nash equilibria. \cite{ref:Yongacoglu23} showed convergence on symmetric stochastic games. With more structural constraints, \cite{ref:Park23} demonstrated convergence on multi-agent zero-sum stochastic games with networked separable interactions, which is a generalization of polymatrix games to the stochastic game setting.

The rest of the paper is organized as follows. We describe stochastic games in \Cref{sec:Markov} and introduce the actor-dual-critic dynamics in \Cref{sec:ac}. We present the convergence results and proofs for both zero-sum and identical-interest stochastic games in \Cref{sec:results} and \Cref{sec:proof}, respectively. We provide illustrative examples in \Cref{sec:examples} and conclude the paper with some remarks in \Cref{sec:conclusion}. 

\textbf{Notations.} Let $[n]:=\{1,\ldots,n\}$ be the index set of agents and superscripts denote agent identity. Denote agent indices other than $i$ by $-i:=\{j\}_{j\neq i}$. Let $\Delta(\Action)$ denote the probability simplex over a finite set $\Action$. Denote the vector of ones by $\mathbf{1}$. Given a matrix $A$, $A^T$ denotes its transpose.

\section{Stochastic Games}\label{sec:Markov}

We consider an $n$-agent discounted stochastic game $\Markov=(\State,(\Action^i)_{i\in[n]},(\Reward^i)_{i\in[n]},p,\gamma)$, where $\State$ and $\Action:=\prod_i \Action^i$ are \textit{finite} state and action sets, $p(\cdot\mid s,a)$ is the state transition kernel, and $\gamma\in [0,1)$ is the discount factor. For each agent $i\in [n]$, $\Reward^i(\cdot\mid s,a)$ is a reward kernel on $\Real$, induced by a reward function $r^i:\State\times \Action\times Z^i\rightarrow\Real$ and an exogenous i.i.d. noise process $(\zeta_k^i)_{k\geq 0}$ with \textit{finite}  support $Z^i$, independent across agents, over time, and independent of the state process. Given that agents take joint action $a_k\in\Action$ and the state is $s_k\in\State$ at stage $k$, agent $i$ receives reward $r_k^i = r^i(s_k,a_k,\zeta_k^i)$. This formulation allows exploration-perturbed action execution to be equivalently modeled via stochastic rewards while treating actions as pure decisions.

Let each agent $i$ randomize their actions contingent on the current state $s$ via a Markov stationary strategy $\pi^i:\State\rightarrow \Delta(\Action^i)$, \textit{independent} of others. The agent $i$'s utility under strategy profile $\pi:= (\pi^i)_{i\in [n]}$ is the discounted sum of the rewards collected over infinite horizon and given by
\be\label{eq:utility}
U^i(\pi) = \E\left[\sum_{k=0}^{\infty} \gamma^k \reward^i(s_k,a_k)\right],
\ee
where $a_k\sim \pi(s_k)$, $s_{k+1} \sim p(\cdot\mid s_k,a_k)$, the expectation is taken over the state transitions and strategy-induced randomness, and $\reward^i(s,a) := \E_\zeta[r^i(s,a,\zeta)]$ denotes the expected reward for the pair $(s,a)$ with the expectation taken over the reward noise. 

\begin{definition}[$\varepsilon$-Nash Equilibrium]\label{def:NE}
A strategy profile $\pi_*$ is an $\varepsilon$-Nash equilibrium (also known as $\varepsilon$-approximate Markov stationary equilibrium) of stochastic game $\Markov$ with $\varepsilon \geq 0$ provided that 
\be\label{eq:NE}
U^i(\pi_*^i,\pi_*^{-i}) \geq U^i(\pi^i,\pi_*^{-i}) - \varepsilon \quad\forall \pi^i \mbox{ and }i,
\ee
where $\pi_*^{-i}:=(\pi_*^j)_{j\neq i}$. Such equilibria exist for any $\Markov$ and $\varepsilon\geq 0$ \citep{ref:Fink64}.
\end{definition}

A common justification for the game-theoretical equilibrium analysis or computation is the \textit{belief} that learning agents would reach equilibrium through non-equilibrium adaptations based on the feedback received across dynamic interactions \citep{ref:Fudenberg09}. In the following, we present independent and payoff-based learning dynamics to strengthen the predictive power of equilibrium for stochastic games. 

\section{Actor-Dual-Critic Learning Dynamics}\label{sec:ac}

In complex multi-agent environments, agents must make decisions under uncertainty and limited observability, often needing to balance short-term reactions with long-term strategic reasoning. In such cases, human decision-making uses the interplay between a \textit{fast} system that reacts quickly based on observations, and a \textit{slow} system that engages in deeper reasoning and planning \citep{ref:Kahneman11}. Motivated by this, we design a novel actor-critic architecture in which each agent is guided by two critics: a \textit{fast critic} that responds fast to immediate observations, and a \textit{slow critic} that slowly evaluates the future impact. This separation enables agents to adapt rapidly to changing local observations while simultaneously optimizing their long-term goals through slow, structured learning.


Let $v_{\pi}^i: \State\rightarrow\mathbb{R}$ and $q_{\pi}^i :\State\times \Action^i \rightarrow \mathbb{R}$, resp., denote the value function and local q-function for strategy profile $\pi$. Then, the agent utility \eqref{eq:utility} yields that
\begin{subequations}\label{eq:qv}
\begin{flalign}
&q_{\pi}^i(s,a^i) = \E_{\substack{a^{-i}\sim \pi^{-i}(s)\\ s' \sim p(\cdot\mid s,a^i,a^{-i})}} \left[\reward^i(s,a^i,a^{-i})+\gamma \cdot v_{\pi}^i(s')\right]\label{eq:q}\\
&v_{\pi}^i(s) = \E_{a^i\sim \pi^i(s)}[q_{\pi}^i(s,a^i)].\label{eq:v}
\end{flalign}
\end{subequations}
Given how the others play, i.e., $\pi^{-i}$, the local q-function depends on the expected immediate reward and the expected value of the next state while the value function depends on the agent's play, i.e., $\pi^i$, and the local q-function. We propose a new learning algorithm in which a fast critic, an actor, and a slow critic, resp., learn these three parameters $q^i,\pi^i$, and $v^i$. 

The fast critic responds fast to learn $q_\pi^i$ based on the local observations and the slow critic's value function estimate by solving \eqref{eq:q} via stochastic approximation iterations and one-stage lookahead. At stage $k$, the local q-update for the previous pair $(s,a^i) = (s_{k-1}^{},a_{k-1}^i)$ at stage $k-1$ is given by
\be\label{eq:qupdate}
q_k^i(s,a^i) = q_{k-1}^i(s,a^i)+\lambda_{k-1} \cdot \frac{r_{k-1}^i+\gamma \cdot v_{k-1}^i(s_k) - q_{k-1}^i(s,a^i)}{\pi_{k-1}^i(a^i\mid s)}
\ee
with some step size $\lambda_{k-1}\in (0,1)$ decaying slowly, where $r_{k-1}^i$ is the local immediate reward received at stage $k-1$, $s_k$ is the (next) state at stage $k$, and $\pi_{k-1}^i(a^i\mid s)\in (0,1)$ denotes the probability that $a^i$ gets played at state $s$ according to the strategy $\pi_{k-1}^i$, with a slight abuse of notation. The agent does not update the q-value for the other state-action pairs, i.e., $q_k^i(s,a^i) = q_{k-1}^i(s,a^i)$ for $(s,a^i) \neq (s_{k-1},a_{k-1}^i)$, since the agent does not observe the reward and the next state associated with those pairs. This implies that the q-values for different state-action pairs get updated asynchronously. The normalization of the second term on the right-hand side of \eqref{eq:qupdate} by the action-execution probability $\pi_{k-1}^i(a^i\mid s)\in (0,1)$ ensures that they all get updated at the same rate in expectation, as in the individual Q-learning dynamics \citep{ref:Leslie05}.

For each agent $i$, define the \textit{exploration} kernel $\explore^i(\cdot\mid a^i) \in \Delta(\Action^i)$ by
\be
\explore^i(\tilde{a}^i\mid a^i) := \left\{
\begin{array}{ll}
1-\epsilon + \frac{\epsilon}{|\Action^i|}&\mbox{if } \tilde{a}^i = a^i\\
\frac{\epsilon}{|\Action^i|}&\mbox{if } \tilde{a}^i \neq a^i
\end{array}
\right. 
\ee
for some exploration rate $\epsilon\in(0,1)$.
Then, the actor (determining how the agent plays) can iteratively improve its strategy $\pi^i_k$ based on the $\epsilon$-best response to the fast critic at the current state according to
\be\label{eq:actor}
\pi_{k}^i(s) = \pi_{k-1}^i(s) + \alpha_{k-1} \cdot  (\BR(q_{k-1}^i(s,\cdot)) - \pi_{k-1}^i(s))
\ee
for all $s$ with some step size $\alpha_{k-1}\in(0,1)$ decaying at a moderate rate, where
\begin{subequations}\label{eq:best}
\begin{flalign}
&\BR(q^i_{k-1}(s,\cdot)) := \explore^i(\cdot\mid \BRfree(q^i_{k-1}(s,\cdot)))\\
&\BRfree(q^i_{k-1}(s,\cdot))\in \argmax_{\tilde{a}^i\in \Action^i} \{q^i_{k-1}(s,\tilde{a}^i)\}.
\end{flalign}
\end{subequations} 
Notably, by \eqref{eq:actor} and \eqref{eq:best}, we can decompose the actor as 
\be\label{eq:decompose}
\pi_k^i(s) = (1-\epsilon)\mu_k^i(s) + \epsilon \frac{\mathbf{1}}{|\Action^i|},
\ee
where $\mu_k^i(s)$ evolves according to
\be\label{eq:actormu}
\mu_{k}^i(s) = \mu_{k-1}^i(s) + \alpha_{k-1} \cdot  (\BRfree(q^i_{k-1}(s,\cdot))- \mu_{k-1}^i(s))\quad\forall s.
\ee

The slow critic evaluates the actor strategy based on the fast critic and the actor iteratively. At stage $k$, the local value function update is given by
\be\label{eq:vupdate}
v_k^i(s) = v_{k-1}^i(s) + \beta_{k-1} \cdot (\pi_{k-1}^i(s)^Tq_{k-1}^i(s,\cdot) - v_{k-1}^i(s))
\ee
for all $s$, with some step size $\beta_{k-1}\in (0,1)$ decaying fast. Here, we view $\pi_{k-1}^i(s)$ and $q_{k-1}^i(s,\cdot)\in \mathbb{R}^{|\Action^i|}$ as $|\Action^i|$-dimensional vectors. \Cref{alg:ADC} tabulates the complete description of the \textit{Actor-Dual-Critic} dynamics.  

\begin{algorithm}[t!]
    \caption{Actor-Dual-Critic Dynamics}
    \begin{algorithmic}\label{alg:ADC}
    \small
    \STATE {\bfseries initialization:} $q_0^i(s,a^i) = v_0^i(s) = 0$ for all $(s,a^i)$ and $\pi_0^i(s) = \mathrm{Uniform}(\Action^i)$ for all $s$ 
    \FOR{each stage $k=0,1,\ldots$}  
    \STATE observe $s_k$
    \IF {$k>0$}
    \STATE for all $(s,a^i)$, update  
    \begin{flalign*}
    &q_k^i(s,a^i) = q_{k-1}^i(s,a^i)+\lambda_{k-1}\cdot \ind{(s,a^i)=(s_{k-1}^{},a_{k-1}^i)} \frac{r_{k-1}^i+\gamma \cdot v_{k-1}^i(s_k) - q_{k-1}^i(s,a^i)}{\pi_{k-1}^i(a^i\mid s)}\\
    &\pi_{k}^i(s) = \pi_{k-1}^i(s) + \alpha_{k-1} \cdot  (\BR(q_{k-1}^i(s,\cdot)) - \pi_{k-1}^i(s)) \\
    &v_k^i(s) = v_{k-1}^i(s) + \beta_{k-1} \cdot  (\pi_{k-1}^i(s)^Tq_{k-1}^i(s,\cdot) - v_{k-1}^i(s))
    \end{flalign*}
    \ENDIF
    \STATE play $a_k^i\sim \pi_k^i(s_k)$
    \STATE receive $r_k^i = r^i(s_k,a_k,\zeta_k^i)$
    \ENDFOR
    \end{algorithmic}
\end{algorithm}

The actor-dual-critic dynamics are aligned with the existing independent and payoff-based methods proposed for stochastic games, such as those in \citep{ref:Sayin21} and \citep{ref:Chen23}. \cite{ref:Sayin21} extends the individual-Q learning framework of \cite{ref:Leslie05} by integrating value iteration in a two-timescale setup, where values evolve more slowly. However, their agents do not follow an actor; instead, they directly respond to the local q-function through a smoothed best response. In contrast, \cite{ref:Chen23} builds on the actor-critic dynamics of \cite{ref:Leslie06}, adapting them to stochastic games using a nested loop where value updates occur only at the end of each inner loop. 

Our actor-dual-critic dynamics differ from \cite{ref:Chen23} in three key ways: (i) value functions are updated continuously at a slower timescale rather than in episodic batches; (ii) actor strategies are improved using $\epsilon$-greedy responses instead of smoothed ones; and (iii) local q-function estimates use normalized step sizes for each action, yielding a more tractable and robust formulation. Furthermore, both \cite{ref:Sayin21} and \cite{ref:Chen23} provide convergence guarantees for their algorithms in two-agent zero-sum stochastic games using asymptotic and non-asymptotic analyses, respectively. In the following section, we establish the convergence of the actor-dual-critic dynamics in \textit{both} two-agent zero-sum and multi-agent identical-interest stochastic games.

\section{Main Results}\label{sec:results}
To present the convergence results, we first address the exploration-perturbed execution and coupled dynamics across the actor and critics. 

\subsection{Effective Stochastic Game Formulation}
The actor-update in \Cref{alg:ADC} relies on $\epsilon$-best responses, which induce persistent and independent exploration in action execution. Rather than modeling this exploration explicitly through mixed strategies, we equivalently incorporate it into the reward and transition mechanisms so that we can view $\epsilon$-best responses as pure best responses in an \emph{effective stochastic game}, characterized by the tuple $\Markoveps=(\State,(\Action^i)_{i\in[n]},(\Rewardeps^i)_{i\in[n]},\peps,\gamma)$. Due to independent exploration, define the joint exploration kernel by $\explore(\tilde{a}\mid a) := \prod_{j\in [n]}\explore^j(\tilde{a}^j\mid a^j)$. Then, the \textit{effective} reward and transition kernels are given by
\begin{subequations}
\begin{flalign}
&\Rewardeps^i(\cdot\mid s,a) := \sum_{\tilde{a}\in \Action} \Reward^i(\cdot\mid s,\tilde{a}) \explore(\tilde{a}\mid a),\\
&\peps(\cdot \mid s,a) :=\sum_{\tilde{a}\in \Action} p(\cdot\mid s,\tilde{a})\explore(\tilde{a}\mid a).
\end{flalign}
\end{subequations}
Correspondingly, the effective \textit{expected} rewards and transitions are given by
\be\label{eq:effectiverewards}
\rewardeps^i (s,a) = \E_{\tilde{a}\sim \explore(\cdot\mid a)}[\reward^i(s,\tilde{a})]\quad\mbox{and}\quad\opeps^i (s'\mid s,a) = \E_{\tilde{a}\sim \explore(\cdot\mid a)}[\peps(s' \mid s,\tilde{a})].
\ee
Let $\mu:=(\mu^i:\State\rightarrow \Delta(\Action^i))_{i\in [n]}$ be the \textit{exploration-free} joint strategy associated with the exploration-perturbed execution strategy $\pi$ in the original game, i.e., $\pi^i(s) = (1-\epsilon)\mu^i(s) + \epsilon \mathrm{Uniform}(\Action^i)$.
When agents play according to $\pi$ in the original stochastic game, the effective utility of agent $i$ in the effective stochastic game is given by
\be\label{eq:effectiveutility}
\Ueps^i(\mu) = \E\left[\sum_{k=0}^\infty\rewardeps^i(s_k,a_k)\right],
\ee
where $a_k\sim \mu(s_k)$ and $s_{k+1} \sim \opeps(\cdot\mid s_k,a_k)$. Therefore, exploration-free strategies in this effective stochastic game $\Markoveps$ are distributionally equivalent to exploration-perturbed strategies in the original game $\Markov$.

\subsection{Three-timescale Dynamics}
In the underlying stochastic game, the interplay between the fast critic, the actor, and the slow critic
yields a cyclic interconnection between them. The timescale separation of their evolution can break this cyclic nature by weakening the coupling between them. Particularly, \Cref{alg:ADC} uses three step sizes $\lambda_k$, $\alpha_k$ and $\beta_k$, decaying at different rates to ensure such timescale separation.  

\begin{assumption}{Step Sizes}\label{assume:step}
The step sizes are given by 
$\lambda_k = (k+1)^{-\rho_{\lambda}}$, $\alpha_k = (k+1)^{-\rho_{\alpha}}$, $\beta_k = (k+1)^{-1}$ for some $\frac{1}{2}<\rho_{\lambda}< \rho_{\alpha}<1$.
\end{assumption}

\Cref{assume:step} ensures that the step sizes satisfy the standard stochastic approximation conditions, i.e., square-summable yet not summable, while $\alpha_k \in o(\lambda_k)$ and $\beta_k \in o(\alpha_k)$, implying three-timescale dynamics. In particular, the step size $\lambda_k$ decays to zero at the slowest rate yielding that the strategies and value estimates are \textit{quasi-stationary} in the local q-update such that $q_k^i(s,\cdot)$ can track $Q_k^i(s,\cdot)\pi_k^{-i}$, where
\be\label{eq:Q}
Q_k^i(s,a) := \reward^i(s,a) + \gamma \sum_{s'\in \State}p(s'\mid s,a) v_k^i(s')\quad\forall (s,a)
\ee  
is the underlying global Q-function associated with the value function estimate $v_k^i$. In the actor update, the step size $\alpha_k$ decays to zero at a moderate rate yielding that the value estimates are \textit{quasi-stationary} while local q-function estimates $q_k^i(s,\cdot)$ are tracking $Q_k^i(s,\cdot)\pi_k^{-i}$. Therefore, in each stage $k$, actors are approximately playing in an induced stage game whose payoffs are $Q_k^i(s,\cdot)$'s associated with the current state. The quasi-stationarity of value function estimates implies the quasi-stationarity of (implicit) global Q-function estimates by \eqref{eq:Q}. Correspondingly, actors are $\epsilon$-greedy responding to each other in the repeated play of quasi-stationary stage games and can track equilibrium of the stage game if it has certain structures. In the slow-critic update, the step size $\beta_k$ decays to zero at the fastest rate yielding that the target term $\pi_k^i(s)^Tq_k^i(s,\cdot)$ can track equilibrium associated with the current value function estimates.

\subsection{Convergence Results}
We define the \textit{Nash gap} and \textit{effective Nash gap} for exploration-perturbed $\pi$ and exploration-free $\mu$ strategy profiles, resp., by
\begin{subequations}\label{eq:gap}
\begin{flalign}
&\Gap(\pi) := \max_{i\in [n]} \Big(\max_{\hat{\pi}^i} U^i(\hat{\pi}^i,\pi^{-i}) - U^i(\pi^i,\pi^{-i})\Big),\\
&\Gap_\epsilon(\mu) := \max_{i\in [n]} \Big(\max_{\hat{\mu}^i} \Ueps^i(\hat{\mu}^i,\mu^{-i}) - \Ueps^i(\mu^i,\mu^{-i}).\Big).\label{eq:gapeffective}
\end{flalign}
\end{subequations}
Given Markov stationary strategies $\pi^{-i}$ or $\mu^{-i}$, the problems $\max_{\hat{\pi}^i} U^i(\hat{\pi}^i,\pi^{-i})$ or $\max_{\hat{\mu}^i} \Ueps^i(\hat{\mu}^i,\mu^{-i})$ induce Markov decision processes, for which Markov stationary solutions always exist \citep{ref:Sutton18}. Furthermore, these gaps are non-negative by definition.

\begin{proposition*}\label{prop:NG}
Consider an exploration-free Markov stationary strategy profile $\mu$ and the associated exploration-perturbed strategy profile $\pi$, i.e., $\pi^i(s) = (1-\epsilon)\mu^i(s) + \epsilon u^i(s)$ for all $i$ and $s$. If $\Gap_\epsilon(\mu)=0$, then $\Gap(\pi)\leq \varepsilon$, where
\be\label{eq:threshold}
\varepsilon:= \frac{2\epsilon}{(1-\gamma)^2}\cdot \max_{(i,s,a)}  |\reward^i(s,a)|
\ee 
and, therefore, $\pi$ is $\varepsilon$-Nash equilibrium.
\end{proposition*}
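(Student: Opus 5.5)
Fix an agent $i$ and write $\bar R := \max_{(i,s,a)}|\reward^i(s,a)|$. The plan is to compare agent $i$'s best deviation in the original game $\Markov$ against $\pi^{-i}$ with its best deviation in the effective game $\Markoveps$ against $\mu^{-i}$. The key observation is that these differ only because, in $\Markoveps$, agent $i$'s own actions are also perturbed by exploration. The hypothesis $\Gap_\epsilon(\mu)=0$ controls only deviations inside $\Markoveps$. We will therefore show that every deviation $\hat\pi^i$ in $\Markov$ can be matched, up to an $O(\epsilon)$ loss, by a deviation in $\Markoveps$.

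First, we establish the identity $U^i(\pi^i,\pi^{-i}) = \Ueps^i(\mu^i,\mu^{-i})$. This is the distributional equivalence noted after \eqref{eq:effectiveutility}: since exploration is independent across agents, playing $a\sim\mu(s)$ and then $\tilde a\sim\explore(\cdot\mid a)$ yields $\tilde a\sim\pi(s)$. Hence the state–action processes coincide in law, and, with the discounting $\gamma^k$ understood in \eqref{eq:effectiveutility}, the utilities are equal.

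Second, take an arbitrary deviation $\hat\pi^i$; by the MDP remark, a Markov stationary optimal one exists. Set $\hat\mu^i:=\hat\pi^i$, used in $\Markoveps$ against $\mu^{-i}$. By the same equivalence argument, $\Ueps^i(\hat\mu^i,\mu^{-i}) = U^i(\tilde\pi^i,\pi^{-i})$ with $\tilde\pi^i(s) = (1-\epsilon)\hat\pi^i(s)+\epsilon\,\mathrm{Uniform}(\Action^i)$. Combining this with $\Gap_\epsilon(\mu)=0$ and the first step gives
\begin{align*}
U^i(\hat\pi^i,\pi^{-i}) - U^i(\pi^i,\pi^{-i}) \le U^i(\hat\pi^i,\pi^{-i}) - U^i(\tilde\pi^i,\pi^{-i}).
\end{align*}
The statement thus reduces to a perturbation bound, $|U^i(\hat\pi^i,\pi^{-i}) - U^i(\tilde\pi^i,\pi^{-i})|\le \frac{2\epsilon \bar R}{(1-\gamma)^2}$.

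Third, we prove this perturbation bound, which is the main technical step. The two joint profiles $\sigma:=(\hat\pi^i,\pi^{-i})$ and $\tilde\sigma:=(\tilde\pi^i,\pi^{-i})$ satisfy $\|\sigma(s)-\tilde\sigma(s)\|_1 = \epsilon\|\hat\pi^i(s)-\mathrm{Uniform}\|_1\le 2\epsilon$ at every state. We apply the performance-difference / simulation decomposition. Write $v_\sigma$ for the value of agent $i$ under $\sigma$, and define $Q_\sigma(s,a)=\reward^i(s,a)+\gamma\sum_{s'}p(s'\mid s,a)v_\sigma(s')$. The difference $v_\sigma-v_{\tilde\sigma}$ then satisfies the Bellman-type recursion
\begin{align*}
v_\sigma(s)-v_{\tilde\sigma}(s) = \big(\sigma(s)-\tilde\sigma(s)\big)^T Q_\sigma(s,\cdot) + \gamma\,\E_{a\sim\tilde\sigma(s),\,s'\sim p(\cdot\mid s,a)}\big[v_\sigma(s')-v_{\tilde\sigma}(s')\big].
\end{align*}
We bound the first term by $2\epsilon\|Q_\sigma\|_\infty\le 2\epsilon\bar R/(1-\gamma)$. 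Iterating the recursion, equivalently taking the sup norm and using the contraction factor $\gamma$, multiplies this by $1/(1-\gamma)$. This gives $\|v_\sigma-v_{\tilde\sigma}\|_\infty\le 2\epsilon\bar R/(1-\gamma)^2$, and averaging over the initial state distribution yields the utility bound.

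The only point that needs care is the constant: using the crude total-variation bound $\|\cdot\|_1\le 2\epsilon$ together with $\|Q\|_\infty\le \bar R/(1-\gamma)$ gives exactly \eqref{eq:threshold}. Finally, taking the maximum over $\hat\pi^i$ and over $i$ gives $\Gap(\pi)\le\varepsilon$, so by Definition~\ref{def:NE} $\pi$ is an $\varepsilon$-Nash equilibrium.
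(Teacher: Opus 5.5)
Your proposal is correct and follows the paper's proof. Both arguments first use the effective-game equivalence $U^i(\pi)=\Ueps^i(\mu)\ge \Ueps^i(\hat\mu^i,\mu^{-i})=U^i(\tilde\pi^i,\pi^{-i})$ to reduce the claim to a perturbation bound between a deviation and its $\epsilon$-perturbed version, and your performance-difference recursion is exactly the paper's identity $v_{\tilde\sigma}-v_{\sigma}=(1-\gamma\Pvec_{\tilde\sigma})^{-1}\big[(\tilde\sigma-\sigma)^TQ_\sigma\big]$ written in unrolled form. The only cosmetic difference is that the paper bounds the reward part ($2\epsilon\bar R$) and the transition part ($2\epsilon\gamma\bar R/(1-\gamma)$) separately, whereas you bound them together through $\|Q_\sigma\|_\infty\le\bar R/(1-\gamma)$; both give the same constant.
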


\begin{proof}
Fix an agent $i$ and consider an arbitrary Markov stationary strategy profile $\tilde{\mu}^i$ in the effective game and the associated exploration-perturbed strategy profile $\tilde{\pi}$ in the original game, i.e., we have $\tilde{\pi}^i(s) = (1-\epsilon)\tilde{\mu}^i(s) + \epsilon u^i$, where $u^i := \frac{\mathbf{1}}{|\Action^i|}$ denotes the uniform distribution over $\Action^i$ for notational simplicity. By the construction of the effective expected reward and transitions \eqref{eq:effectiverewards} in the effective utility \eqref{eq:effectiveutility} and $\Gap_\epsilon(\mu)=0$ as assumed in the proposition statement, we have
\be\label{eq:UUU}
U^i(\pi) = \Ueps^i(\mu) \geq \Ueps^i(\tilde{\mu}^i,\mu^{-i}) = U^i(\tilde{\pi}^i,\pi^{-i}).
\ee

Given a Markov stationary strategy profile $\bar{\pi}$, a standard solution for the value of each state in the original stochastic game is given by 
\be
\vvec_{\bar{\pi}}^i = (1-\gamma \Pvec_{\bar{\pi}})^{-1} \rvec_{\bar{\pi}}^i,
\ee
where $\vvec_{\bar{\pi}}^i \in \mathbb{R}^{|\State|}$ is the value vector, $\Pvec_{\bar{\pi}}\in \mathbb{R}^{|\State|\times |\State|}$ is the (stochastic) transition matrix, and $\rvec_{\bar{\pi}}^i \in \mathbb{R}^{|\State|}$ is the reward vector. For example, the entry $\Pvec_{\bar{\pi}}(s,s') = \E_{a\sim \bar{\pi}(s)}[p(s'\mid s,a)]$ and $\rvec_{\bar{\pi}}^i(s)=\E_{a\sim \bar{\pi}(s)}[\reward^i(s,a)]$. Correspondingly, we have the identity
\be\nn
\vvec_{\tilde{\pi}^i,\pi^{-i}} - \vvec_{\tilde{\mu}^i,\pi^{-i}} = (1-\gamma \Pvec_{\tilde{\pi}^i,\pi^{-i}})^{-1}(\rvec_{\tilde{\pi}^i,\pi^{-i}}^i - \rvec_{\tilde{\mu}^i,\pi^{-i}}^i + \gamma (\Pvec_{\tilde{\pi}^i,\pi^{-i}} - \Pvec_{\tilde{\mu}^i,\pi^{-i}})\vvec_{\tilde{\mu}^i,\pi^{-i}}^i).
\ee 
Taking the infinity norm of both hand side and applying triangle inequality, we obtain
\begin{flalign}
\|\vvec_{\tilde{\pi}^i,\pi^{-i}} - \vvec_{\tilde{\mu}^i,\pi^{-i}}\|_{\infty} \leq &\;\|(1-\gamma \Pvec_{\tilde{\pi}^i,\pi^{-i}})^{-1}\|\|\rvec_{\tilde{\pi}^i,\pi^{-i}}^i - \rvec_{\tilde{\mu}^i,\pi^{-i}}^i\|_\infty\nn\\
&+\gamma \|(1-\gamma \Pvec_{\tilde{\pi}^i,\pi^{-i}})^{-1}\|\|(\Pvec_{\tilde{\pi}^i,\pi^{-i}} - \Pvec_{\tilde{\mu}^i,\pi^{-i}})\vvec_{\tilde{\mu}^i,\pi^{-i}}^i\|_\infty.\label{eq:bb}
\end{flalign}
Since $\tilde{\pi}^i(s) = (1-\epsilon)\tilde{\mu}^i(s) + \epsilon u^i$, we have $\rvec_{\tilde{\pi}^i,\pi^{-i}}^i = (1-\epsilon) \rvec_{\tilde{\mu}^i,\pi^{-i}}^i + \epsilon \rvec_{u^i,\pi^{-i}}^i$ and  $\Pvec_{\tilde{\pi}^i,\pi^{-i}} = (1-\epsilon) \Pvec_{\tilde{\mu}^i,\pi^{-i}} + \epsilon \Pvec_{u^i,\pi^{-i}}$. This yields that
\be
\|\rvec_{\tilde{\pi}^i,\pi^{-i}}^i - \rvec_{\tilde{\mu}^i,\pi^{-i}}^i\|_{\infty} = \epsilon \|\rvec_{\tilde{\mu}^i,\pi^{-i}}^i - \rvec_{u^i,\pi^{-i}}^i\|_{\infty}\leq 2 \epsilon \max_{(j,s,a)}  |\reward^j(s,a)|\label{eq:b1}
\ee
and
\begin{flalign}
\|(\Pvec_{\tilde{\pi}^i,\pi^{-i}} - \Pvec_{\tilde{\mu}^i,\pi^{-i}})\vvec_{\tilde{\mu}^i,\pi^{-i}}^i\|_\infty &= \epsilon \|(\Pvec_{\tilde{\mu}^i,\pi^{-i}} - \Pvec_{u^i,\pi^{-i}})\vvec_{\tilde{\mu}^i,\pi^{-i}}^i\|_\infty \\
&\leq 2\epsilon \|\vvec_{\tilde{\mu}^i,\pi^{-i}}^i\|_\infty \leq \frac{2\epsilon}{1-\gamma} \max_{(j,s,a)}  |\reward^j(s,a)|\label{eq:b2}
\end{flalign}
as the induced norm $\|\Pvec_\pi\|\leq 1$ for all $\pi$ since the associated transition matrices are stochastic matrices for any strategy profile. Furthermore, the Neumann series gives
\be
\|(1-\gamma \Pvec_{\tilde{\pi}^i,\pi^{-i}})^{-1}\| = \left\|\sum_{t=0}^\infty \gamma^t \Pvec_{\tilde{\pi}^i,\pi^{-i}}^t\right\|\leq \frac{1}{1-\gamma}.\label{eq:b3}
\ee
Plugging the bounds \eqref{eq:b1}, \eqref{eq:b2}, and \eqref{eq:b3} into \eqref{eq:bb}, we obtain
\be\label{eq:vbound}
\|\vvec_{\tilde{\pi}^i,\pi^{-i}} - \vvec_{\tilde{\mu}^i,\pi^{-i}}\|_{\infty}\leq \frac{2\epsilon}{(1-\gamma)^2}\max_{(j,s,a)}  |\reward^j(s,a)|=\varepsilon.
\ee
By the definition of value and utility functions, \eqref{eq:vbound} yields that $U^i(\tilde{\pi}^i,\pi^{-i}) \geq U^i(\tilde{\mu}^i,\pi^{-i}) - \varepsilon$ for all $\tilde{\mu}^i$. Thus, by \eqref{eq:UUU}, we can conclude that $U^i(\pi) \geq U^i(\tilde{\mu}^i,\pi^{-i}) - \varepsilon$ for all $\tilde{\mu}^i$, which completes the proof.
\end{proof}

Recall the asynchronous update of fast-critic parameters due to their dependence on observations. Agents do not know the counterfactuals about the actions not taken or states not visited. Therefore, whether every state gets visited sufficiently many times plays an important role in convergence guarantees. To this end, we make the following assumption on the underlying state transition dynamics. 

\begin{assumption}{Reachability}\label{assume:reach}
We have $p(s'\mid s, a) >0$ for any $(s,a,s')$.
\end{assumption}

Such a reachability assumption is used in \citep{ref:Leslie20,ref:Sayin22}. Although we can assume weaker reachability conditions as in \citep{ref:Sayin21,ref:Chen23} for zero-sum stochastic games, that would lead to  more involved technical challenges for identical-interest stochastic games and, therefore, left as a future research direction. 

Now, we can first present the convergence guarantees of the actor-dual-critic dynamics for two-agent zero-sum stochastic games $\Markov_z = (\State,(\Action^i)_{i\in[n]},(\Reward^i)_{i\in[n]},p,\gamma)$, where the induced expected reward functions are $\reward^1(s,a) + \reward^2(s,a) = 0$ for all $(s,a)$. The following theorem shows the decaying Nash-gap for the zero-sum case. 

\begin{theorem}{Zero-sum Stochastic Games}\label{thm:mainZS}
Consider a two-agent zero-sum stochastic game $\Markov_z$.  Suppose that all agents independently follow the actor-dual-critic dynamics, as described in \Cref{alg:ADC}, and \Cref{assume:step,assume:reach} hold. Then, the exploration-free actor strategy $\mu_k$, evolving as in \eqref{eq:actormu}, satisfy $\Gap_\epsilon(\mu_k) \rightarrow 0$ almost surely and, therefore, we almost surely have
\be\label{eq:NGZ}
\limsup_{k\rightarrow\infty} \Gap(\pi_k) \leq \varepsilon.
\ee
\end{theorem}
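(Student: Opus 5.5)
The plan is a three-timescale stochastic approximation argument carried out in the effective game $\Markoveps$. We then use the Proposition above to pass from $\Gap_\epsilon(\mu_k)\to 0$ to the bound \eqref{eq:NGZ}. The Nash gap is continuous in $\pi$, and for $\mu$ with small effective gap the bound of the Proposition holds up to a vanishing error: rerun its proof with $\Ueps^i(\mu)\ge \Ueps^i(\tilde\mu^i,\mu^{-i})-\Gap_\epsilon(\mu)$ in \eqref{eq:UUU}. It therefore suffices to prove $\Gap_\epsilon(\mu_k)\to 0$ almost surely.

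\textbf{Step 1: fastest timescale (fast critic).} With $\lambda_k$ the slowest-decaying step, we treat $(\pi_k,v_k)$ as quasi-static. The normalized update \eqref{eq:qupdate} under \Cref{assume:reach} (every state is visited infinitely often at a linear rate) and $\pi_k^i(a^i\mid s)\ge \epsilon/|\Action^i|$ behaves like a synchronous update in expectation. Asynchronous stochastic-approximation results, together with boundedness of iterates (all quantities stay in a box of size $\max|\reward|/(1-\gamma)$ by induction), then give
\[
q_k^i(s,\cdot)-Q_k^i(s,\cdot)\pi_k^{-i}(s)\to 0
\]
almost surely. Writing $\pi_k^{-i}$ through $\mu_k^{-i}$ and the exploration kernel, this equals $Q_{\epsilon,k}^i(s,\cdot)\mu_k^{-i}(s)$ plus a term linear in the actor's own exploration. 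That extra term does not affect $\argmax$ comparisons, because agent $i$'s own exploration only enters after the max through $\explore^i$.

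\textbf{Step 2: middle timescale (actors).} Freezing $v_k\approx v$, the $\mu$-update \eqref{eq:actormu} is a discrete best-response dynamic in the two-player zero-sum stage games with payoffs $Q_\epsilon^i(s,\cdot)$, one at each state. Its limiting differential inclusion is $\dot\mu\in \BRfree-\mu$. For zero-sum matrix games, the continuous best-response dynamics admit the Lyapunov function $\sum_i\big(\max_{\hat\mu^i}\hat\mu^{iT}Q^i_\epsilon\mu^{-i}-\mu^{iT}Q^i_\epsilon\mu^{-i}\big)$, which decays exponentially (Hofbauer--Sorin). Using the asymptotic pseudo-trajectory framework of Benaïm--Hofbauer--Sorin, this stage-game Nash gap tends to zero along the iterates, and the error from Step 1 is absorbed as a vanishing perturbation. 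Consequently $\pi_k^i(s)^Tq_k^i(s,\cdot)$ tracks $\mathrm{val}_\epsilon(Q_{k}^i(s))$, the value of the effective stage game at the current $v_k$.

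\textbf{Step 3: slowest timescale (slow critic).} Equation \eqref{eq:vupdate} becomes $v_k^i\approx v_{k-1}^i+\beta_{k-1}(T_\epsilon^i v_{k-1}^i-v_{k-1}^i+o(1))$, where $T_\epsilon^i$ is the Shapley operator of $\Markoveps$. This operator is a $\gamma$-contraction in sup norm, so $v^i_k\to v^i_{*,\epsilon}$, the unique effective game value, with $v^1+v^2\to 0$. Finally, combine $v_k\to v_*$ with the stage-game gap vanishing. An effective stage-equilibrium strategy for the Shapley equation at $v_*$ is an effective Markov equilibrium, and the one-shot deviation/performance-difference bound (as in the Proposition's proof) turns the small stage gaps, together with $\|v_k-v_*\|\to 0$, into $\Gap_\epsilon(\mu_k)\to 0$.

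\textbf{Main obstacle.} I expect the hard step to be Step 2's coupling. The actor step is applied at all states every iteration while $q$ is updated only at visited states, and $\BRfree$ is discontinuous. Making the three-timescale reduction rigorous requires differential-inclusion tools rather than ODE ones. One must also show the Lyapunov decay is uniform while $Q_k$ drifts slowly, with derivative bounded by $\beta_k/\alpha_k\to 0$. The approach I would try first is to write the drift of the Lyapunov function along \eqref{eq:actormu} as $-\alpha_k L_k+O(\alpha_k\|q_k-\text{target}\|+\beta_k+\alpha_k^2)$ and conclude by a standard deterministic lemma for such recursions.
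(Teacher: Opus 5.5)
\textbf{Gap in Step 2: the frozen stage games are not zero-sum.} Your reduction from $\Gap_\epsilon(\mu_k)\to0$ to \eqref{eq:NGZ} is correct: you rerun the Proposition with $\Ueps^i(\mu_k)\geq\Ueps^i(\tilde\mu^i,\mu_k^{-i})-\Gap_\epsilon(\mu_k)$. Your three-timescale skeleton also matches the paper's sketch. But Step 2 contains a genuine gap. The frozen stage games with payoffs $Q^1_\epsilon(s,\cdot)$ and $Q^2_\epsilon(s,\cdot)$ are \emph{not} zero-sum. Each agent builds its global Q-function from its own slow critic, and since $\reward^1+\reward^2=0$,
\[
Q^1(s,\cdot)+Q^2(s,\cdot)^T=\gamma\sum_{s'}p(s'\mid s,\cdot)\,\big(v^1(s')+v^2(s')\big).
\]
This is nonzero for a generic frozen pair $(v^1,v^2)$, and nothing in the algorithm forces $v_k^1=-v_k^2$.

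For such general-sum stage games the Hofbauer--Sorin function is not a Lyapunov function; best-response dynamics can cycle, as in Shapley's example. Moreover, even at a stage equilibrium, $\mu^{iT}Q^i_\epsilon\mu^{-i}$ need not equal $\mathrm{val}_\epsilon(Q^i)$. So the input Step 3 needs for its contraction argument is not available. Obtaining $v^1_k+v^2_k\to0$ only as a by-product of Step 3 is therefore circular. The paper singles out exactly this point as the main difficulty in the zero-sum case: $\overline{Q}_k^i(s,\cdot)\neq-\overline{Q}_k^{-i}(s,\cdot)^T$ because of the independent value updates. It handles this by appealing to the near-zero-sum Lyapunov analysis of earlier work, which shows the stage games become zero-sum asymptotically.

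\textbf{How to repair it.} Prove $w_k:=v_k^1+v_k^2\to0$ \emph{before} Step 2, independently of what the actors do. The slow-critic target is the actor's expected payoff $\pi_k^i(s)^Tq_k^i(s,\cdot)$ rather than a maximum. Writing $q_k^i=Q_k^i\pi_k^{-i}-\delta_k^i$ therefore gives
\[
\hatv_k^1(s)+\hatv_k^2(s)=\gamma\sum_{a}\pi_k^1(a^1\mid s)\pi_k^2(a^2\mid s)\sum_{s'}p(s'\mid s,a)\,w_k(s')-\sum_{i}\pi_k^i(s)^T\delta_k^i(s).
\]
Hence $\|w_{k+1}\|_\infty\leq(1-(1-\gamma)\beta_k)\|w_k\|_\infty+\beta_k\sum_i\max_s\|\delta_k^i(s)\|_\infty$. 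Then $w_k\to0$ follows from Step 1 and $\sum_k\beta_k=\infty$. This is the zero-sum counterpart of the paper's recursion for $\Delta_k$ in its rate proposition.

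In Step 2, the Hofbauer--Sorin drift then acquires an extra term of order $\|Q^1_{\epsilon,k}+(Q^{2}_{\epsilon,k})^T\|_{\max}\leq\gamma\|w_k\|_\infty\to0$. You absorb it together with the tracking error. Equilibria of the near-zero-sum stage games are $O(\|w_k\|_\infty)$-equilibria of the zero-sum game with payoff $Q^i_{\epsilon,k}$. So $\hatv_k^i$ tracks $\mathrm{val}_\epsilon(Q^i_k)$ up to a vanishing error, and Step 3 goes through as written.

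\textbf{A smaller point.} Your box bound ``by induction'' fails in the first iterations, because $\lambda_k/\pi_k^i(a^i\mid s)$ can exceed $1$ (e.g.\ at $k=0$). Boundedness follows only after the finite time at which the normalized step drops below $1$, together with the finitely many earlier iterates being bounded.
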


Intuitively, the actor-dual-critic dynamics are expected to reach equilibrium for zero-sum games due to their similarity with the dynamics in \citep{ref:Sayin21,ref:Chen23}. More explicitly, the three timescale dynamics described above implies that value function estimates track the minimax equilibrium of the stage games associated with the current value estimates. Then, the contraction property of the \cite{ref:Shapley53}'s minimax operator yields the convergence with some approximation error induced by the inherent exploration. Therefore, its convergence proof has a similar flavor with the analyses in \citep{ref:Sayin21,ref:Chen23}. 

We next present the convergence guarantees for multi-agent $(n\geq 2)$ identical-interest stochastic games  $\Markov_p = (\State,(\Action^i)_{i\in[n]},(\Reward^i)_{i\in[n]},p,\gamma)$, where the induced expected reward functions are $\reward^i(s,a) = \reward(s,a)$ for all $(i,s,a)$ for some common reward function $\reward:\State\times \Action \rightarrow \mathbb{R}$. We make the following additional assumption on step sizes:

\begin{assumption*}\label{assume:additional}
The step sizes satisfy $\rho_\alpha \geq \frac{3}{2}\rho_\lambda$ and $\rho_\alpha+\rho_\lambda > \frac{3}{2}$.
\end{assumption*}

The following theorem shows the decaying Nash-gap for the identical-interest case. 

\begin{theorem}{Identical-interest Stochastic Games}\label{thm:mainII}
Consider a multi-agent $(n\geq 2)$ identical-interest stochastic game $\Markov_p$.  Suppose that all agents independently follow the actor-dual-critic dynamics, as described in \Cref{alg:ADC}, and \Cref{assume:step,assume:reach,assume:additional} hold. Then, the exploration-free actor strategy $\mu_k$, evolving as in \eqref{eq:actormu}, satisfy $\Gap_\epsilon(\mu_k) \rightarrow 0$ almost surely and, therefore, we almost surely have
\be\label{eq:NGP}
\limsup_{k\rightarrow\infty} \Gap(\pi_k) \leq \varepsilon.
\ee
\end{theorem}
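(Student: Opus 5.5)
The plan is to reduce everything to the effective game $\Markoveps$ and work on the three timescales. Proposition~\ref{prop:NG} (with continuity of the Nash gap in $\mu$) turns $\Gap_\epsilon(\mu_k)\to 0$ into \eqref{eq:NGP}, so only the first claim needs proof.

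\textbf{Fast timescale (fast critic).} Under \Cref{assume:reach}, every state is visited infinitely often at a positive asymptotic frequency, and every local action is executed with probability at least $\epsilon/|\Action^i|$. The normalization by $\pi_{k-1}^i(a^i\mid s)$ turns the asynchronous update \eqref{eq:qupdate} into a synchronous one in expectation. With $\alpha_k,\beta_k\in o(\lambda_k)$, the standard two-timescale stochastic-approximation argument (Borkar) then gives
\[
q_k^i(s,\cdot)-Q_k^i(s,\cdot)\pi_k^{-i}(s)\to 0 \quad\text{almost surely},
\]
where $Q_k^i$ is defined as in \eqref{eq:Q}. Read through the effective game, this says $q_k^i(s,\cdot)$ tracks $Q_{\epsilon,k}(s,\cdot)\mu_k^{-i}(s)$, with $Q_{\epsilon,k}(s,a)=\rewardeps(s,a)+\gamma\sum_{s'}\opeps(s'\mid s,a)v_k(s')$. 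The extra conditions in Assumption~\ref{assume:additional} would be used here to get a quantitative tracking rate: the error $\|q_k-Q_k\pi_k^{-i}\|$ should be $O(\alpha_k/\lambda_k+\sqrt{\lambda_k\log k})$ along sample paths. The middle timescale needs this to be summable against $\alpha_k$ and $\beta_k$.

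\textbf{Middle timescale (actors).} Freeze $v$ and write $\Phi_s(\mu(s)) := \mu(s)^T Q_\epsilon(s,\cdot)$ for the common stage payoff. This is a potential for the continuous-time best-response dynamics $\dot\mu^i=\BRfree-\mu^i$ in an identical-interest stage game. I would use the perturbed differential inclusion framework (Bena\"im–Hofbauer–Sorin) with $V=\max_a Q_\epsilon(s,a)-\Phi_s$, or alternatively the stage-game Nash gap, as Lyapunov function. The goal is to show $\mu_k(s)$ approaches the set of stage-game Nash equilibria of $Q_{\epsilon,k}(s,\cdot)$, so the stage Nash gap $\to0$.

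This step is where I expect the main obstacle. The equilibrium set need not be isolated, and $v_k$ drifts slowly. I would therefore not rely on convergence to a point. Instead I would use the discrete Lyapunov inequality
\[
\Phi_s(\mu_{k+1})\ge \Phi_s(\mu_k)+\alpha_k\,\mathrm{gap}_k-O(\alpha_k^2+\alpha_k e_k+\beta_k),
\]
where $e_k$ is the fast-critic tracking error. Summing gives $\sum\alpha_k\mathrm{gap}_k<\infty$ in the appropriate sense, and with the rate conditions this gives $\mathrm{gap}_k\to0$, in the quasi-monotonicity style of \citep{ref:Baudin22,ref:Sayin22b}.

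\textbf{Slow timescale (slow critic).} Given the previous step, $\pi_k^i(s)^Tq_k^i(s,\cdot)\approx \mu_k(s)^TQ_{\epsilon,k}(s,\cdot)$, which is the value at a near stage equilibrium. The $v$-update is then an asynchronous-free iteration
\[
v_{k+1}(s)=v_k(s)+\beta_k\big(\Phi_s(\mu_k)-v_k(s)+o(1)\big).
\]
Contraction fails for identical interest, so I would use quasi-monotonicity instead. Since $\Phi_s(\mu_k)$ is at least the value of any stage-equilibrium the actors settle near, and the actors only improve $\Phi$, one shows $\liminf_k(\Phi_s(\mu_k)-v_k(s))\ge0$. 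Because values are bounded by $\max|\reward|/(1-\gamma)$, $v_k$ then converges almost surely to some $v^*$ with $\Phi_s(\mu_k)-v_k(s)\to0$.

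\textbf{Conclusion.} So $v^*$ is, in the limit, the value of $\mu_k$ in $\Markoveps$, and each $\mu_k(s)$ is an approximate stage equilibrium of $Q_\epsilon[v^*]$. The one-shot deviation principle for MDPs then gives $\Gap_\epsilon(\mu_k)\to0$, and Proposition~\ref{prop:NG} yields \eqref{eq:NGP}.
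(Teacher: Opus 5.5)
\textbf{There is a genuine gap, and it lies in the order of your middle and slow steps.}

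\emph{The middle step does not close.} Your potential $\Phi_s$ is built on $Q_{\epsilon,k}$, which moves with $v_k$. The payoff drift $\mu_{k+1}^T(Q_{\epsilon,k+1}-Q_{\epsilon,k})\mu_{k+1}$ equals $\beta_k$ times the innovation of the global Q-estimate, and that innovation has no known sign. This is exactly your $-O(\beta_k)$ term. Because $\beta_k=(k+1)^{-1}$ is not summable, summing the inequality gives nothing, so you get neither $\sum_k\alpha_k\,\mathrm{gap}_k<\infty$ nor $\mathrm{gap}_k\to0$. The separation $\beta_k=o(\alpha_k)$ does not rescue this on its own. With multiple, non-isolated stage equilibria, a slowly drifting $v_k$ can repeatedly destroy the equilibrium component the actors are near. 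A gap bounded away from zero can then recur infinitely often, unless $v_k$ is already known to settle.

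\emph{The slow step is circular and also incomplete.} The only way to control the sign of that drift is quasi-monotonicity of the slow variable. But your slow step derives it from the middle-step conclusion (``the value of any stage-equilibrium the actors settle near''). In addition, the implication ``$\liminf_k(\Phi_s(\mu_k)-v_k(s))\ge0$ plus boundedness $\Rightarrow$ $v_k$ converges'' is false as stated. A negative part of order $1/\log k$ still sums to $-\infty$ against $\beta_k$, which allows oscillation. You need the negative part of the innovation to be summable against $\beta_k$, i.e.\ an explicit rate.

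\emph{You also ignore the inter-agent mismatch.} You treat $v$ as common, but each agent keeps its own $v_k^i$, so the stage games are identical-interest only asymptotically. The mismatch $\Delta_k(s)=Q_k^i(s,\cdot)^T-Q_k^{-i}(s,\cdot)$ has to be controlled quantitatively.

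\textbf{How the paper orders the argument.} It first proves quasi-monotonicity of the global Q-estimates without any appeal to equilibrium tracking.
\begin{enumerate}
\item The key structural fact is that the actor's target is $\epsilon$-greedy with respect to the fast critic. Hence $(\hatpi_k^i(s)-\pi_k^i(s))^Tq_k^i(s,\cdot)\ge0$ for every agent.
\item This makes the increments of the slow-critic target $\hatv_k^i(s)=\pi_k^i(s)^Tq_k^i(s,\cdot)$ non-negative, up to the tracking errors $\delta_k^i$ and the mismatch $\Delta_k$.
\item Explicit polynomial decay rates for $\delta_k^i$ and $\Delta_k$ are derived; this is where \Cref{assume:additional} enters. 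A Ces\`aro-type argument on the innovation recursion then gives $\min_{(s,a)}Y_k^i(s,a)\ge -C\log(k+1)/(k+1)^{a}$ with $a>0$.
\item This bound is summable against $\beta_k$, which yields $Q_k^i\to Q_*^i$ almost surely.
\item Only then, with the stage payoff frozen at $Q_*$, is the actor update treated as generalized fictitious play in a fixed identical-interest game (Leslie--Collins together with Bena\"im--Hofbauer--Sorin). This gives convergence to stage equilibria.
\end{enumerate}
From that point, your concluding one-shot-deviation step and the appeal to \Cref{prop:NG} go through essentially as you describe.
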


We highlight that our main technical contribution is on the convergence analysis for identical-interest stochastic games, which has not been addressed in \citep{ref:Sayin21,ref:Chen23}. The convergence analysis for zero-sum games is not applicable for identical-interest games due to  multiple equilibrium values and the lack of contraction property. \cite{ref:Monderer96b} addressed the convergence of fictitious play in identical-interest \textit{normal-form} games played repeatedly by showing the quasi-monotonicity of the game value while both agents are seeking for higher payoffs. It is not exactly a monotonic increase due to the lack of access to opponent strategy, simultaneous update of actions in addition to their randomness. \cite{ref:Baudin22} has extended this approach to identical-interest stochastic games through continuous-time two-timescale stochastic differential inclusion methods to show the convergence of fictitious play in such games. However, here the lack of access to opponent actions and maintaining local value function estimates pose a significant technical challenge. \cite{ref:Sayin22b} has addressed the independent value function updates for stochastic games with single controllers through a discrete-time analysis. They have also focused on fictitious play dynamics where agents have access to opponent actions. 

\Cref{alg:ADC} differs from fictitious-play-type dynamics by maintaining an actor and updating the game values based on the actor strategy (rather than the best response). Though agents maintain independent value updates, the value estimates across agents get close to each other as the local q-function estimates $q_k^i(s,\cdot)$ track $Q_k^i(s,\cdot)\pi_k^{-i}$. By characterizing this tracking rate, we can quantify their impact on different value function estimates. Furthermore, the randomization of the actions according to actor pose a challenge for the discrete-time analysis. 

\section{Convergence Analysis}\label{sec:proof}

\Cref{alg:ADC} yields that the iterates $q_k^i,\pi_k^i$ and $v_k^i$ for stage $k$ evolve according to the following update rules:
\begin{subequations}\label{eq:update}
\begin{flalign}
&q_{k+1}^i(s,a^i) = q_k^i(s,a^i) + \lambda_k\cdot\frac{\ind{(s,a^i)=(s_k^{},a_k^i)}}{\pi_k^i(a^i\mid s)}\cdot (\hatq_k^i - q_k^i(s,a^i))\label{eq:localQupdate}\\
&\pi_{k+1}^i(s) = \pi_k^i(s)+\alpha_k\cdot (\hatpi_k^i(s) - \pi_k^i(s))\label{eq:piupdate}\\
&v_{k+1}^i(s) = v_k^i(s) + \beta_k \cdot (\hat{v}_k^i(s) - v_k^i(s)),\label{eq:vupdate}
\end{flalign}
\end{subequations}
for all $(s,a^i)$, where local reward $r_k^i = r^i(s_k,a_k,\zeta_k^i)$, local action $a_k^i\sim \pi_k^i(s_k)$, and
\be
 \hatq_k^i = r_k^i + \gamma \cdot v_k^i(s_{k+1}), \quad \hatpi_k^i(s):=\BR(q_k^i(s,\cdot)), \quad\hat{v}_k^i(s) := \pi_k^i (s)^Tq_k^i(s,\cdot).
\ee
The normalization by $\pi_k^i(a^i\mid s)$ in \eqref{eq:localQupdate} can make the normalized step size larger than $1$. However, the policy update taking step toward the $\epsilon$-greedy response $\hatpi_k^i(s)$ ensures that there is a uniform lower bound, e.g., $\epsilon/|\Action^i|$, on the probability of the actions taken. Therefore, decaying $\lambda_k^i$ (by \Cref{assume:step}) yields that normalized step sizes become less than $1$ in \textit{finite} time and all iterates remain \textit{bounded}.

By \eqref{eq:vupdate} and \eqref{eq:Q}, the (implicit) global Q-function evolves according to 
\be
Q_{k+1}^i(s,a) = Q_k^i(s,a) + \beta_k \cdot (\hatQ_k^i(s,a) - Q_k^i(s,a)), \label{eq:globalQupdate}
\ee
where we define 
\be\label{eq:hatQ}
\hatQ_k^i(s,a) := \reward^i(s,a) + \gamma\sum_{s'\in\State} p(s'\mid s,a) \cdot \hatv_k^i(s').
\ee

We address the convergence of \eqref{eq:update} and \eqref{eq:globalQupdate}, i.e., \Cref{alg:ADC}, for zero-sum and identical-interest stochastic games, resp., in \Cref{sec:zerosum} and \Cref{sec:potential}. 

\subsection{Proof for Two-agent Zero-sum Stochastic Games}\label{sec:zerosum}
The proof for the zero-sum games is relatively straight-forward based on \citep{ref:Sayin22,ref:Sayin21}. Particularly, the three timescale dynamics ensure that local q-value estimates can track the one associated with the (implicit) global Q-value estimate and the opponent policy, i.e., $\|q_k^i(s)-Q_k^i(s,\cdot)\pi_k^{-i}(s)\|\rightarrow 0$ almost surely. Therefore, agents are essentially playing exploration-perturbed best response dynamics at the timescale of the actor updates. Correspondingly, the exploration-free actor, as described in \eqref{eq:actormu}, can be written in the generalized-fictitious-play structure \citep{ref:Leslie06} as
\be\label{eq:muZS}
\mu_{k+1}^i(s) - (1-\alpha_k)\cdot \mu_k^i(s) \in \alpha_k\cdot \mathrm{BR}_k(\mu_k^{-i}(s)), 
\ee
where 
\begin{flalign}
\mathrm{BR}_k(\mu_k^{-i}(s)) := \{\mu^i\in\Delta(\Action^i): &\;(\mu^i)^T\overline{Q}_k^i(s,\cdot)\mu_k^{-i}(s) \geq (\tilde{\mu}^i)^T\overline{Q}_k^i(s,\cdot)\mu_k^{-i}(s) - z_k^i(s),\nn\\
&\forall \tilde{\mu}^i\in\Delta(\Action^i)\}
\end{flalign}
for some $z_k^i(s)$ decaying to zero almost surely and the \textit{effective} global Q-function estimate is defined by
\be\label{eq:QBZ}
\overline{Q}_k^i(s,\cdot) := (1-\epsilon) Q_k^i(s,\cdot) + \frac{\epsilon}{|\Action^{-i}|}Q_k^i(s,\cdot)\mathbf{1}\mathbf{1}^T + \frac{\epsilon}{|\Action^{i}|}\mathbf{1}\mathbf{1}^TQ_k^{i}(s,\cdot).
\ee

However, $Q_k^i(s,\cdot)\neq -Q_k^{-i}(s,\cdot)^T$ and, therefore, $\overline{Q}_k^i(s,\cdot)\neq -\overline{Q}_k^{-i}(s,\cdot)^T$ in general due to the independent value updates \eqref{eq:vupdate}. The differences in the corresponding global Q-function estimates pose a challenge. The previous works such as \cite{ref:Sayin22,ref:Sayin21} have already addressed this issue by crafting a new Lyapunov function for near zero-sum games and shown that the stage games become zero-sum asymptotically, i.e., $\|\overline{Q}_k^i(s,\cdot)+\overline{Q}_k^{-i}(s,\cdot)^T\|\rightarrow 0$ almost surely under \Cref{assume:step,assume:reach}. 

Since the stage games become zero-sum games asymptotically and \eqref{eq:muZS} is a generalized fictitious play dynamics, exploration-free actor $\mu^i(s)$ can track the \textit{minimax} equilibrium in the associated effective stage games with the payoffs $\overline{Q}_k^i(s,\cdot)$ and $-\overline{Q}_k^{i}(s,\cdot)^T$ for each $s$. In other words, we have 
\be\label{eq:minmax}
\left|\mu_k^i(s)^T\overline{Q}_k^i(s,\cdot)\mu_k^{-i}(s)-\max_{\mu^i\in\Delta(\Action^i)}\min_{\mu^{-i}\in\Delta(\Action^{-i})} (\mu^i)^T\overline{Q}_k^i(s,\cdot)\mu^{-i}\right|\rightarrow 0
\ee
almost surely. On the other hand, by \eqref{eq:decompose} and \eqref{eq:QBZ}, we can show that 
\begin{flalign}
\pi_k^i(s)^TQ_k^i(s,\cdot)\pi_k^{-i}(s)
&= (1-\epsilon) \cdot \mu_k^i(s)^T\overline{Q}_k^i(s,\cdot)\mu_k^{-i}(s).\label{eq:newidentity}
\end{flalign}
By \eqref{eq:minmax}, \eqref{eq:newidentity}, and $\|q_k^i(s)-Q_k^i(s,\cdot)\pi_k^{-i}(s)\|\rightarrow 0$, we can write \eqref{eq:vupdate}  as
\be
v_{k+1}^i(s) = v_k^i(s) + \beta_k \cdot \Big((1-\epsilon) \max_{\mu^i}\min_{\mu^{-i}} (\mu^i)^T\overline{Q}_k^i(s,\cdot)\mu^{-i} + \bar{z}_k^i(s) - v_k^i(s)\Big),
\ee
where $\bar{z}_k^i(s)$ is some asymptotically negligible error term.

Next, based on \eqref{eq:QBZ}, we define the operator
\be
\mathcal{T}(Q) = (1-\epsilon)Q + \frac{\epsilon}{|\Action^{-i}|}Q\mathbf{1}\mathbf{1}^T + \frac{\epsilon}{|\Action^{i}|}\mathbf{1}\mathbf{1}^T Q.
\ee
Then, the maximum norm of the difference is bounded by
\begin{flalign}
\|\mathcal{T}(Q)-\mathcal{T}(Q')\|_{\max} &\leq (1-\epsilon)\|Q-Q'\|_{\max} + \frac{\epsilon}{|\Action^{-i}|} \|(Q-Q')\mathbf{1}\mathbf{1}^T\|_{\max} \nn\\
&\hspace{.2in} + \frac{\epsilon}{|\Action^{i}|}\|\mathbf{1}\mathbf{1}^T (Q-Q')\|_{\max}\\
&\leq (1+\epsilon)\|Q-Q'\|_{\max}
\end{flalign}
Therefore, the non-expansiveness of the maximin function yields that
\begin{flalign}
(1-\epsilon)&\left|\max_{\mu^i\in\Delta(\Action^i)}\min_{\mu^{-i}\in\Delta(\Action^{-i})} (\mu^i)^T\mathcal{T}(Q)\mu^{-i}- \max_{\mu^i\in\Delta(\Action^i)}\min_{\mu^{-i}\in\Delta(\Action^{-i})} (\mu^i)^T\mathcal{T}(Q')\mu^{-i}\right|\nn\\
&\leq (1-\epsilon) \|\mathcal{T}(Q)-\mathcal{T}(Q')\|_{\max} \leq (1-\epsilon^2) \|Q-Q'\|_{\max}\leq \|Q-Q'\|_{\max}.
\end{flalign}
Furthermore, by the definition \eqref{eq:Q}, we have $\|Q-Q'\|_{\max}\leq \gamma \|v-v'\|_\infty$. The (perturbed) $\gamma$-contraction of the value update and \Cref{assume:step} imply its almost-sure convergence and, therefore, the almost-sure decay of the effective Nash gap \eqref{eq:gapeffective} since the exploration-free actor tracks the associated minimax equilibrium. Then, by \Cref{prop:NG}, we can conclude \eqref{eq:NGZ}. This summarizes the key steps in the proof of \Cref{thm:mainZS}. 

We have not provided the entire proof in full details since the convergence is intuitive given the previous results \citep{ref:Sayin22,ref:Sayin21}, the technical contribution is relatively marginal, and the space is limited. Instead, we mainly focus on the convergence of \Cref{alg:ADC} for identical-interest games, different from the existing works \citep{ref:Sayin22,ref:Sayin21,ref:Chen23}.

\subsection{Proof for Multi-agent Identical-interest Stochastic Games}\label{sec:potential}
Recall the challenge that in identical-interest games (where $r^i(s,a)=r(s,a)$ for all $(i,s,a)$ for some common reward function $r:\State\times \Action\rightarrow\mathbb{R}$), multiple equilibria can have different game values and the value iteration based on Nash equilibria does not necessarily have the contraction property. To resolve these challenges, we leverage the \textit{quasi-monotonicity} of the value function estimates, specifically the global Q-function estimates \eqref{eq:globalQupdate}. We particularly focus on whether the accumulated impact of the innovation terms in the global-Q update \eqref{eq:globalQupdate} becomes asymptotically non-negative such that global Q-values show a quasi-monotonic increase. Then, the boundedness of the iterates yields their convergence.

More formally, we seek to show that
\be\label{eq:liminfcondition}
\liminf_{k\rightarrow \infty} \inf_{k_2\geq k_1\geq k} \{Q_{k_2}^i(s,a)-Q_{k_1}^i(s,a)\} \geq 0\quad\forall (i,s,a),
\ee
which implies that for any $\epsilon>0$, there exists $K$ such that for each $(s,a)$, any future Q-values $k'\geq k$ are at most as small as $Q_K^i(s,a)-\epsilon$, e.g., see \citep[Proposition 6]{ref:Sayin22b}. Denote the innovation in  \eqref{eq:globalQupdate} by $Y_k^i \equiv \hatQ_k^i - Q_k^i$. Then, \eqref{eq:liminfcondition} can be written as
\be\label{eq:liminfYcondition}
\liminf_{k\rightarrow \infty} \inf_{k_2\geq k_1\geq k} \sum_{l =k_1}^{k_2-1} \beta_l \cdot Y_l^i(s,a) \geq 0\quad\forall (i,s,a).
\ee
Henceforth, we consider \textit{two-agent} games, where agent $i$ is the typical player while agent $-i$ is the typical opponent, for notational simplicity. However, the results can be extended to the games with more than two agents straightforwardly. 

To show \eqref{eq:liminfYcondition}, we focus on the evolution of the innovation $Y_k^i$. By \eqref{eq:globalQupdate}, we have
\begin{flalign}
Y_{k+1}^i(s,a) = (1-\beta_k) Y_k^i(s,a) + \gamma \sum_{s'\in \State}p(s'\mid s,a) (\hatv_{k+1}^i(s') - \hatv_k^i(s')).\label{eq:Ydiff}
\end{flalign}
After some algebra, we can show that
\begin{flalign}
\hatv_{k+1}^i(s) - \hatv_k^i(s) = &\;\beta_k\pi_k^i(s)^TY_k^i(s,\cdot)\pi_k^{-i}(s) + \alpha_k(\hatpi_k^i(s) - \pi_k^i(s))^T(q_k^i(s,\cdot) +\delta_k^i(s)) \nn\\
&+ \alpha_k(\hatpi_k^{-i}(s) - \pi_k^{-i}(s))^T\big(q_k^{-i}(s,\cdot) + \delta_k^{-i}(s) + \Delta_k(s)\pi_k^i(s)\big) \nn\\
&-\pi_{k+1}^i(s)^T\delta_{k+1}^i(s) + \pi_k^i(s)^T\delta_k^i(s) + O(\alpha_k^2),\label{eq:udiff2}
\end{flalign} 
where we define the tracking error and mismatch error, resp., by
\be\label{eq:deltas}
\delta_k^i(s) := Q_k^i(s,\cdot)\pi_k^{-i}(s) - q_k^i(s,\cdot)\;\forall i \quad\mbox{and}\quad\Delta_k(s) := Q_k^i(s,\cdot)^T-Q_k^{-i}(s,\cdot),
\ee
where we view $Q_k^i(s,\cdot)\in \mathbb{R}^{|\Action^i|\times |\Action^{-i}|}$ as a $|\Action^i|\times |\Action^{-i}|$-dimensional matrix. The high-order term $O(\alpha_k^2)$ is due to \Cref{assume:step}. 

In the policy update \eqref{eq:piupdate}, the actor targets the $\epsilon$-greedy response in which each action $a^i$ gets played with at least $\epsilon/|\Action^i|>0$ probability all the time. Since the actor takes a convex combination of the target and the previous policy, each action always gets played with at least $\epsilon/|\Action^i|>0$ according to the policies learned. Thus, we have
\be\label{eq:nonnegative}
(\hatpi_k^{i}(s) - \pi_k^i(s))^Tq_k^i(s,\cdot)\geq 0\quad\mbox{and}\quad (\hatpi_k^{-i}(s) - \pi_k^{-i}(s))^Tq_k^{-i}(s,\cdot)\geq 0
\ee
for all $k$ and $s$. This non-negativity plays an important role in showing quasi-monotonicity \eqref{eq:liminfYcondition} without characterizing the decay rate of the Nash gap \eqref{eq:gap}. It is instructive to note that \eqref{eq:nonnegative} does not necessarily hold for smoothed best responses.

Combining \eqref{eq:Ydiff}, \eqref{eq:udiff2}, and \eqref{eq:nonnegative}, we can show that
\begin{flalign}
Y_{k+1}^i(s,a) \geq (1-\beta_k)Y_k^i(s,a) &+ \gamma \beta_k \uY_k^i + e_k^i(s,a) - z_{k+1}^i(s,a) + z_k^i(s,a),\label{eq:YY}
\end{flalign}
where we define
\begin{subequations}\label{eq:ez}
\begin{flalign}
&\uY_k^i := \min_{(s',a')} Y_k^i(s',a'),\label{eq:uYdef}\\
&e_k^i(s,a) := \gamma \sum_{s'} p(s'\mid s,a) \Big(\alpha_k(\hatpi_k^i(s') - \pi_k^i(s'))^T\delta_k^i(s') \nn\\
&\hspace{.6in}+ \alpha_k(\hatpi_k^{-i}(s') - \pi_k^{-i}(s'))^T\big(\delta_k^{-i}(s') + \Delta_k(s')\pi_k^i(s')\big) + O(\alpha_k^2)\Big),\\
&z_k^i(s,a) := \gamma \sum_{s'} p(s'\mid s,a) \pi_k^i(s')^T\delta_k^i(s').
\end{flalign}
\end{subequations}
Iterating \eqref{eq:YY} from $0$ to $k$ and telescoping the terms $- z_{k+1}^i(s,a) + z_k^i(s,a)$ yield that
\begin{flalign*}
Y_{k+1}^i(s,a)\geq \frac{\gamma}{k+1}\sum_{l=0}^k \uY_l^i - z_{k+1}^i(s,a) +\frac{1}{k+1}\sum_{l=0}^kz_l^i(s,a) + \frac{1}{k+1}\sum_{l=0}^k (l+1)e_l^i(s,a).
\end{flalign*}
Define
\be\label{eq:eta}
\eta_k^i := \min_{(s,a)}\left\{- z_{k}^i(s,a) +\frac{1}{k}\sum_{l=0}^{k-1}z_l^i(s,a) + \frac{1}{k}\sum_{l=0}^{k-1} (l+1)e_l^i(s,a)\right\}
\ee
Then, we have
\be\label{eq:uY}
\uY_{k+1}^i \geq \frac{\gamma}{k+1}\sum_{l=0}^k \uY_l^i + \eta_{k+1}^i
\ee
and \eqref{eq:uYdef} implies that
\be\label{eq:uYsum}
\liminf_{k\rightarrow \infty} \inf_{k_2\geq k_1\geq k} \sum_{l =k_1}^{k_2-1} \beta_l \cdot Y_l^i(s,a) \geq \liminf_{k\rightarrow \infty} \inf_{k_2\geq k_1\geq k} \sum_{l =k_1}^{k_2-1} \beta_l \cdot \uY_l^i. 
\ee
Thus, we can complete the proof by showing the non-negativity of the right-hand side based on its evolution \eqref{eq:uY}. However, the long run behavior of the error term $\eta_k^i$ poses a challenge.

\subsubsection{Rate Characterization}
To restrain the impact of the error term in \eqref{eq:uY}, we first characterize the decay rates for the tracking error $\delta_k^i$ and the mismatch error $\Delta_k$, as described in \eqref{eq:deltas}.

\begin{proposition*}\label{prop:rate}
Under \Cref{assume:step,assume:reach}, we have 
\be
\|\delta_k^i(s)\|\in O((k+1)^{\rho_\lambda-0.5})\quad\mbox{and}\quad \|\Delta_k^i(s)\| \in O((k+1)^{-1})
\ee 
for all $s$ and $i$ almost surely.
\end{proposition*}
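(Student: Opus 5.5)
The first bound needs no rate analysis. Since $\frac12<\rho_\lambda$ by \Cref{assume:step}, the exponent $\rho_\lambda-0.5$ is strictly positive. So it suffices to show that $\delta_k^i(s)$ is almost surely bounded.

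To get this, I use the observation made after \eqref{eq:update}. Every action is played with probability at least $\epsilon/|\Action^i|$. Hence the normalized step sizes $\lambda_k/\pi_k^i(a^i\mid s)$ fall below $1$ after a finite (random) time. From then on, each update in \eqref{eq:update} is a convex combination of bounded quantities:
\begin{itemize}
\item $\hatq_k^i$ is a bounded reward plus $\gamma v_k^i$;
\item $\hatv_k^i$ is an average of the entries of $q_k^i$;
\item $\hatpi_k^i$ lies in the simplex.
\end{itemize}
A standard induction then gives $\sup_k\|q_k^i\|_\infty\le C$ and $\sup_k\|v_k^i\|_\infty\le C$ almost surely, with $C$ depending on $\max|\reward|$, $\gamma$ and the finite transient. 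Therefore $\|Q_k^i(s,\cdot)\pi_k^{-i}(s)\|$ is bounded by \eqref{eq:Q}, and so $\|\delta_k^i(s)\|\le C'\le C'(k+1)^{\rho_\lambda-0.5}$. This is the first claim.

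For the mismatch error, I use identical interests, $\reward^i=\reward^{-i}=\reward$. Then \eqref{eq:Q} gives
\[
\Delta_k(s)=\gamma\sum_{s'}p(s'\mid s,\cdot)\,\big(v_k^i(s')-v_k^{-i}(s')\big),
\]
so $\|\Delta_k(s)\|\le\gamma\|d_k\|_\infty$ with $d_k:=v_k^i-v_k^{-i}$. Note that $d_0=0$ by the initialization in \Cref{alg:ADC}.

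Subtracting the two slow-critic updates and substituting $q_k^j=Q_k^j\pi_k^{-j}-\delta_k^j$ gives
\[
d_{k+1}(s)=(1-\beta_k)d_k(s)+\beta_k\Big(\pi_k^i(s)^T\Delta_k(s)\pi_k^{-i}(s)+w_k(s)\Big),
\]
where $w_k(s):=\pi_k^{-i}(s)^T\delta_k^{-i}(s)-\pi_k^i(s)^T\delta_k^i(s)$. Since $\beta_k=1/(k+1)$, multiplying by $k+1$ and summing from $d_0=0$ yields
\[
(k+1)d_{k+1}=\sum_{l=0}^k\Big(\pi_l^{iT}\Delta_l\pi_l^{-i}+w_l\Big).
\]
Writing $D_k:=(k+1)\|d_{k+1}\|_\infty$, this gives
\[
D_k\le\gamma\sum_{l\le k}\|d_l\|_\infty+\Big\|\sum_{l\le k}w_l\Big\|_\infty .
\]

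The plan then has two parts.
\begin{itemize}
\item \textbf{Bound the noise sum.} Show that the partial sums $W_k:=\sum_{l\le k}w_l$ are almost surely bounded.
\item \textbf{Close the recursion.} Use the contraction factor $\gamma<1$ in a discrete Gr\"onwall-type argument on $D_k$. Assume $D_l\le M$ for all $l<k$ and check $D_k\le M$ for $M$ large enough compared with $\sup_k\|W_k\|$. This gives $\|d_k\|_\infty\in O((k+1)^{-1})$, and hence $\|\Delta_k(s)\|\in O((k+1)^{-1})$.
\end{itemize}

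I expect the main obstacle to be the boundedness of $W_k$. My first attempt will split $\delta_k^j$ into a martingale-difference part and a drift part. The martingale part comes from the sampled rewards, next states and actions in \eqref{eq:localQupdate}. The drift part comes from the slow motion of $\pi_k^{-j}$ and $v_k^j$ inside the target $Q_k^j\pi_k^{-j}$.

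For the martingale part, I will check that its increments are square-summable, which gives almost sure convergence by the martingale convergence theorem. For the drift part, I will exploit the antisymmetry of $w_k$ across the two agents in the identical-interest setting: both agents' fast critics track the same common-payoff target up to $\Delta_k$. The aim is to show that the drift terms cancel up to a summable remainder, using \Cref{assume:reach} so that every pair $(s,a^j)$ is updated at a linear rate.
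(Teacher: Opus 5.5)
Your plan for the mismatch error has a genuine gap, for two independent reasons.

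\textbf{The closing step fails.} Even if $\sup_k\|W_k\|_\infty<\infty$, the induction does not close. From $D_k\le\gamma\sum_{l=1}^{k}D_{l-1}/l+\|W_k\|_\infty$ and the hypothesis $D_l\le M$ for $l<k$, you only get $D_k\le\gamma M\sum_{l=1}^{k}l^{-1}+\sup_k\|W_k\|_\infty\approx\gamma M\log k$. The discrete Gr\"onwall lemma then gives $D_k=O(k^{\gamma})$, i.e.\ $\|d_k\|_\infty=O(k^{-(1-\gamma)})$. This is sharp. With a single state your recursion is exactly $d_{k+1}=\bigl(1-\frac{1-\gamma}{k+1}\bigr)d_k+\frac{w_k}{k+1}$. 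Taking $w_0=1$ and $w_l=0$ for $l\ge1$ gives $W_k\equiv1$, but $d_k=\prod_{j=1}^{k-1}\bigl(1-\frac{1-\gamma}{j+1}\bigr)\asymp k^{-(1-\gamma)}$. So bounded partial sums cannot give $O(1/k)$ once $\gamma>0$.

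\textbf{$W_k$ is not bounded anyway.} The tracking error $\delta_l^j$ is not a martingale difference. It is a $\lambda$-smoothed accumulation of the innovations $\omega^j$, with memory of roughly $1/\lambda_l$ steps. Summing over $l$ undoes the smoothing, so $\sum_{l\le k}\pi_l^j(s)^T\delta_l^j(s)$ behaves like $\sum_{l\le k}\pi_l^j(s)^T\omega_l^j(s)$ rescaled by visit probabilities. That is a martingale whose conditional variances are generically bounded away from zero, because of the independent reward noises, the importance weights, and the fact that $q^i(s,a_k^i)$ and $q^{-i}(s,a_k^{-i})$ are different functions of the joint action. Its increments are not square-summable, it typically grows like $\sqrt{k}$, and nothing cancels between the two agents. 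The drift part is not antisymmetric either. To leading order it is $\alpha_l/\lambda_l$ times the \emph{difference} of the two agents' improvement terms $(\hat\pi_l^j(s)-\pi_l^j(s))^Tq_l^j(s,\cdot)$. Since $\sum_l\alpha_l/\lambda_l=\infty$, bounding its partial sums would need a Nash-gap rate, which the paper's analysis deliberately avoids.

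\textbf{The tracking-error bound.} Your argument is valid only for the literal statement, whose exponent carries a sign slip: $(k+1)^{\rho_\lambda-0.5}$ grows. The intended content is the decay $\|\delta_k^i(s)\|\in O((k+1)^{-(\rho_\lambda-1/2)})$. The paper's contraction argument is built to deliver this, and every later step uses it: the forcing $\beta_kC_2(k+1)^{-(\rho_\lambda-0.5)}$ in the $\Delta$ comparison, $z_k^i\in O((k+1)^{0.5-\rho_\lambda})$, and the rate of $e_k^i$. Boundedness supplies none of it. The paper obtains the decay from a Lyapunov recursion for $\|\delta_k^i(s)\|_2^2$:
\begin{itemize}
\item reachability gives the contraction $1-2p_o\lambda_k$;
\item Young's inequality with weights $\alpha_k/(2p_o\lambda_k)$ and $\beta_k/(2p_o\lambda_k)$ turns the motion of the target $Q_k^i(s,\cdot)\pi_k^{-i}(s)$ into forcing of order $(\alpha_k/\lambda_k)^2+(\beta_k/\lambda_k)^2+\lambda_k=O((k+1)^{-\rho_\lambda})$, which uses $\rho_\alpha\ge\frac32\rho_\lambda$ from the additional step-size assumption;
\item an almost-sure rate lemma for perturbed contractions concludes.
\end{itemize}

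\textbf{What rate is attainable for the mismatch.} The paper does not sum exactly as you do. It dominates $\pm\Delta_k(s)$ by scalar recursions with contraction $1-(1-\gamma)\beta_k$ and forcing $\beta_kC_2(k+1)^{-(\rho_\lambda-1/2)}$. Solved as in the paper's later Ces\`aro-mean proposition, such a recursion yields only $O\bigl(k^{-\min\{1-\gamma,\rho_\lambda-1/2\}}\bigr)$, with a logarithm at equality. This is also what your recursion gives once $\|w_l\|=O(l^{-(\rho_\lambda-1/2)})$ is fed in. Neither route reaches $O((k+1)^{-1})$, so your plan cannot be completed at that rate.
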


\begin{proof}
We first focus on the decay rate of $\delta_k^i(s,a)$. For notational simplicity, let
\be
y_k^i(s,a^i):= \frac{\ind{(s,a^i)=(s_k^{},a_k^i)}}{\pi_k^i(a^i\mid s)}\cdot (\hatq_k^i - q_k^i(s,a^i))
\ee
such that $q_{k+1}^i \equiv q_k^i+\lambda_k y_k$. Then, \eqref{eq:update} and \eqref{eq:deltas} yield
\begin{flalign}
\|\delta_{k+1}^i(s)\|_2^2 = &\;\|\delta_k^i(s)\|_2^2 - 2 \lambda_k \delta_k^i(s)^Ty_k^i(s,\cdot) + 2\alpha_k \delta_k^i(s)^TQ_k^i(s,\cdot)(\hatpi_k^{-i}(s)-\pi_k^{-i}(s)) \nn\\
&+ 2\beta_k \delta_k^i(s)^TY_k^i(s,\cdot)\pi_k^{-i}(s) + O(\lambda_k^2).\label{eq:deltanorm}
\end{flalign}
Define the filtration $\mathcal{F}_k:=\sigma(q_l^i,\pi_l^i,v_l^i,s_{l-1},a_{l-1};l\leq k, i\in [n])$ and 
\begin{subequations}
\begin{flalign}
&\bary_k^i(s,a^i):=\E[y_k^i(s,a^i)\mid \mathcal{F}_k] = p(s\mid s_{k-1},a_{k-1})\delta_k^i(s)\\
&\omega_k^i(s,a^i) := y_k^i(s,a^i) - \bary_k^i(s,a^i),
\end{flalign}
\end{subequations}
which implies that $\E[\omega_k^i(s,a^i)\mid \mathcal{F}_k] = 0$. Then, we can write \eqref{eq:deltanorm} as
\begin{flalign}
\|\delta_{k+1}^i(s)\|_2^2 = &\;(1-2\lambda_kp(s\mid s_{k-1},a_{k-1}))\|\delta_k^i(s)\|_2^2  + 2\beta_k \delta_k^i(s)^TY_k^i(s,\cdot)\pi_k^{-i}(s) \nn\\ &
+ 2\alpha_k \delta_k^i(s)^TQ_k^i(s,\cdot)(\hatpi_k^{-i}(s)-\pi_k^{-i}(s)) -2\lambda_k \delta_k^i(s)^T\omega_k^i(s,\cdot)+ O(\lambda_k^2).\label{eq:deltanorm2}
\end{flalign}

Based on the Cauchy-Schwartz and AM-GM inequalities, we have
\begin{subequations}\label{eq:CS}
\begin{flalign}
&2\delta_k^i(s)^TY_k^i(s,\cdot)\pi_k^{-i}(s) \leq d_k \|Y_k^i(s,\cdot)\pi_k^{-i}(s)\|_2^2 + \frac{1}{d_k} \|\delta_k^i(s)\|_2^2\\
&2\delta_k^i(s)^TQ_k^i(s,\cdot)(\hatpi_k^{-i}(s)-\pi_k^{-i}(s)) \leq c_k \|Q_k^i(s,\cdot)(\hatpi_k^{-i}(s)-\pi_k^{-i}(s))\|_2^2 + \frac{1}{c_k}\|\delta_k^i(s)\|_2^2
\end{flalign}
\end{subequations}
for any $d_k,c_k>0$. By \Cref{assume:reach}, there exists $p_o >0$ such that $p_o \leq p(s'\mid s,a)$ for all $(s,a,s')$. Furthermore, the boundedness of the iterates due to the nature of the update \eqref{eq:update} and \Cref{assume:step} yields that there exists $C_o > 0$ such that $C_o \geq \|Q_k^i(s,\cdot)(\hatpi_k^{-i}-\pi_k^{-i}(s))\|_2^2$ and $C_o \geq \|Y_k^i(s,\cdot)\pi_k^{-i}(s)\|_2^2 $.
Let $\tlambda_k:=p_o \lambda_k$. Then, by \eqref{eq:deltanorm} and \eqref{eq:CS}, we obtain
\begin{flalign}\nn
\|\delta_{k+1}^i(s)\|_2^2 \leq &\;\bigg(1-2\tlambda_k + \frac{\alpha_k}{c_k} + \frac{\beta_k}{d_k}\bigg)\|\delta_k^i(s)\|_2^2 + (\alpha_k c_k + \beta_k d_k) C_o + \tlambda_k \xi_k^i(s) + O(\tlambda_k^2),
\end{flalign}
where $\xi_k^i(s):= -2\delta_k^i(s)^T\omega_k^i(s,\cdot)/p_o$ and $\E[\xi_k^i(s)\mid\mathcal{F}_k] = 0$ since $\delta_k^i$ is $\mathcal{F}_k$-measurable. Set $c_k = \frac{\alpha_k}{2\tlambda_k}$ and $d_k = \frac{\beta_k}{2\tlambda_k}$. Then, we can bound it by
\begin{flalign}\nn
\|\delta_{k+1}^i(s)\|_2^2 \leq &\;(1-\tlambda_k)\|\delta_k^i(s)\|_2^2 + \tlambda_k\bigg(\frac{\alpha_k^2}{\tlambda_k^2} C_o + \frac{\beta_k^2}{\tlambda_k^2} C_o + \tlambda_k D_o + \xi_k^i(s)\bigg),
\end{flalign}
for some constant $D_o>0$.

Fix state $s$ and agent $i$. Since $\rho_\lambda\leq 2\rho_\alpha-2\rho_\lambda< 2\rho_\beta-2\rho_\lambda$ by \Cref{assume:step,assume:additional}, we introduce an auxiliary sequence $\{\theta_k\}_{k\geq k_o}$ evolving according to
\be
\theta_{k+1} \leq (1-\tlambda_k)\theta_k + \tlambda_k\bigg(\frac{C_1}{(k+1)^{\rho_\lambda}} + \xi_k^i(s)\bigg)
\ee
for all $k\geq k_o$ and $\theta_{k_o} = \|\delta_{k_o}^i(s)\|_2^2$ for some $C_1>0$ such that $\|\delta_k^i(s)\|_2^2 \leq \theta_k$ for all $k\geq k_o$. Since $\tlambda_k\in\Theta((k+1)^{-\rho_\lambda})$, and $\rho_\lambda\in(0.5,1)$, for this non-negative sequence, we can invoke \citep[Lemma 1]{ref:Liu22} to conclude $\theta_k\in O((k+1)^{2\rho_\lambda-1})$ almost surely. Since $0\leq \|\delta_k^i(s)\|_2^2 \leq \theta_k$, we have $\|\delta_k^i(s)\|_2 \in O((k+1)^{\rho_\lambda-0.5})$ almost surely.

Next, we address the decay rate of $\Delta_k$. By \eqref{eq:update} and \eqref{eq:deltas}, we have
\begin{flalign}
\Delta_{k+1}(s) = &\;(1-\beta_k)\Delta_k(s) + \beta_k \gamma \sum_{s'}p(s'\mid s,a) \Big(\pi_k^i(s')^T\Delta_k(s')\pi_k^{-i}(s') \nn\\
&+ \delta_k^i(s')^T\pi_k^i(s') - \delta_k^{-i}(s')^T\pi_k^{-i}(s')\Big).
\end{flalign}
Fix state $s$. Then, we introduce auxiliary sequences $\{\oDelta_k\}_{k\geq k_o}$ and $\{\uDelta_k\}_{k\geq k_o}$ evolving according to
\begin{flalign*}
\E[\oDelta_{k+1}\mid \mathcal{F}_k] \leq (1-(1-\gamma)\beta_k)\oDelta_k + \beta_k \frac{C_2}{(k+1)^{\rho_\lambda-0.5}} \\
\E[\uDelta_{k+1}\mid \mathcal{F}_k] \leq (1-(1-\gamma)\beta_k)\uDelta_k + \beta_k \frac{C_2}{(k+1)^{\rho_\lambda-0.5}} 
\end{flalign*}
for all $k\geq k_o$ and $\oDelta_{k_o} = \max\{\Delta_{k_o},0\}$, $\uDelta_{k_o}=\max\{-\Delta_{k_o},0\}$ for some $C_2$ such that $-\uDelta_k \leq \Delta_k(s) \leq \oDelta_k$ for all $k\geq k_o$ since $\|\delta_k^i(s)\|_2 \in O((k+1)^{\rho_\lambda-0.5})$. Since $\beta_k = 1/(k+1)$, for these non-negative sequences, we can again invoke \citep[Lemma 1]{ref:Liu22} to conclude $\|\Delta_k(s)\|\in O((k+1)^{-1})$ almost surely.
\end{proof}

Based on \eqref{eq:ez}, \Cref{prop:rate} yields that the decay rate for $e_k^i$ is at most the minimum of $\{\rho_\alpha+\rho_\lambda - 0.5, 1+\rho_\alpha,2\rho_\alpha\}$. Thus, we have
\be
e_k^i(s,a) \in O((k+1)^{0.5-\rho_\alpha-\rho_\lambda}).
\ee
Since $0.5-\rho_\alpha-\rho_\lambda<-1$ by \Cref{assume:additional}, we also have
\be
\frac{1}{k}\sum_{l=0}^{k-1} (l+1)|e_l^i(s,a)| \in O((k+1)^{1.5-\rho_\alpha-\rho_\lambda}).
\ee
On the other hand, the decay rates for $z_k^i$ and its Cesaro mean are given by
\be
z_k^i(s,a) \in O((k+1)^{0.5-\rho_\lambda})\quad\mbox{and}\quad \frac{1}{k}\sum_{l=0}^{k-1} |z_l^i(s,a)| \in O((k+1)^{0.5-\rho_\lambda}).
\ee
Then, by \eqref{eq:eta}, we have $\eta_k^i \in O((k+1)^{1.5-\rho_\alpha-\rho_\lambda})$ since $1.5-\rho_\alpha-\rho_\lambda \geq 0.5-\rho_\lambda$ when $\alpha_k\leq 1$.

\subsubsection{Lower Bound Formulation} We next bound the right-hand side of \eqref{eq:uYsum} based on the recursion \eqref{eq:uY} and the error decay rate $\eta_k^i \in O((k+1)^{1.5-\rho_\alpha-\rho_\lambda})$. Let $\oY_k^i$ denote the Cesaro mean of $\uY_k^i$ as
$\oY_k^i:= \frac{1}{k+1}\sum_{l=0}^k \uY_l^i$.
Then, by \eqref{eq:uY}, we have
\be\label{eq:ooY}
\oY_{k+1}^i \geq \bigg(1-\frac{1-\gamma}{k+2}\bigg)\oY_k^i + \frac{\eta_{k+1}^i}{k+2}.
\ee
The following proposition formulates a decaying lower bound on the Cesaro mean.


\begin{proposition*}
There exists a sequence $\theta_k\in O(k^{-\min\{1-\gamma,\rho_\alpha+\rho_\lambda-1.5\}} \log k)$ bounding the Cesaro mean $\oY_k$ from below, i.e., $\theta_k\leq x_k$ for all $k$.
\end{proposition*}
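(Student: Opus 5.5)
We will construct $\theta_k$ explicitly as the solution of the linear recursion obtained by turning \eqref{eq:ooY} into an equality, and then check by induction that it lies below $\oY_k^i$. Let $\kappa:=\rho_\alpha+\rho_\lambda-1.5>0$, which is positive by \Cref{assume:additional}. Since $\eta_k^i\in O((k+1)^{-\kappa})$ almost surely, there is a (sample-path dependent) constant $C>0$ with $\eta_{k}^i\geq -C(k+1)^{-\kappa}$ for all $k\geq 1$. We define $\theta_0:=\min\{\oY_0^i,0\}$ and
\be\nn
\theta_{k+1}:=\bigg(1-\frac{1-\gamma}{k+2}\bigg)\theta_k - \frac{C}{(k+2)^{1+\kappa}}.
\ee
The comparison $\theta_k\leq \oY_k^i$ then follows by induction. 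The base case holds by construction. For the inductive step, the coefficient $1-\frac{1-\gamma}{k+2}$ is nonnegative, so $\theta_k\leq\oY_k^i$ is preserved under multiplication by it, and we also have $\eta_{k+1}^i/(k+2)\geq -C(k+2)^{-1-\kappa}$. Plugging both into \eqref{eq:ooY} gives $\theta_{k+1}\leq\oY_{k+1}^i$.

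It remains to bound $|\theta_k|$. We unroll the recursion with the products $\Pi_{l,k}:=\prod_{j=l}^{k-1}\big(1-\frac{1-\gamma}{j+2}\big)$. Using $\log(1-x)\leq -x$ and the harmonic-sum estimate, we get $\Pi_{l,k}\leq c\,\big(\frac{l+2}{k+1}\big)^{1-\gamma}$ for some constant $c$. Hence
\be\nn
|\theta_k|\leq c\,|\theta_0|\,(k+1)^{-(1-\gamma)} + cC\,(k+1)^{-(1-\gamma)}\sum_{l=0}^{k-1}(l+2)^{1-\gamma-1-\kappa}.
\ee
The first term is $O(k^{-(1-\gamma)})$. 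For the second term we split into three cases according to the sign of $1-\gamma-\kappa$.
\begin{itemize}
\item[(i)] If $\kappa<1-\gamma$, the sum behaves like $k^{1-\gamma-\kappa}$, so the term is $O(k^{-\kappa})$.
\item[(ii)] If $\kappa>1-\gamma$, the sum converges, so the term is $O(k^{-(1-\gamma)})$.
\item[(iii)] If $\kappa=1-\gamma$, the sum is $O(\log k)$, so the term is $O(k^{-(1-\gamma)}\log k)$.
\end{itemize}
In every case $|\theta_k|\in O(k^{-\min\{1-\gamma,\kappa\}}\log k)$. The logarithm accounts uniformly for the boundary case (iii), which is exactly the claimed rate.

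The main obstacle is not the algebra but the legitimacy of the constant $C$. The rate for $\eta_k^i$ from \Cref{prop:rate} holds only almost surely, with random constants, and possibly only for $k\geq k_o$. So we would fix a sample path on the probability-one event and start the recursion at a (random) $k_o$ beyond which both the rate bound and the boundedness of the iterates hold. The initial value $\theta_{k_o}=\min\{\oY_{k_o}^i,0\}$ is finite by boundedness of the iterates, and the finitely many earlier indices can be handled by setting $\theta_k:=\min_{l\leq k_o}\oY_l^i$ there. A secondary check is that $\eta_k^i$ is genuinely bounded below at the stated rate, and not only in absolute value after Cesaro averaging. This follows from the bounds on $e_k^i$, $z_k^i$ and their Cesaro means derived just above, since $O(\cdot)$ there was obtained via absolute values.
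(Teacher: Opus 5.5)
Your proof is correct and follows the paper's argument essentially step for step. The paper also turns \eqref{eq:ooY} into the comparison recursion $\theta_{k+1}=\big(1-\frac{1-\gamma}{k+2}\big)\theta_k-B(k+2)^{-\rho-1}$, bounds the products by $\Lambda_{k,l}\le\big(\frac{l+2}{k+2}\big)^{1-\gamma}$ via $\log(1-u)\le -u$, and splits into the same three cases according to whether $\rho+\gamma$ is below, equal to, or above $1$. Your extra care over the sample-path-dependent constant, the one-sided bound on $\eta_k^i$, and the choice $\theta_0\le 0$ only makes explicit what the paper leaves implicit.
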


\begin{proof}
Define $\rho:=\rho_\alpha+\rho_\beta - 1.5$ for notational simplicity and $ \rho\in (0,0.5)$ by \Cref{assume:additional}. Then, there exists some $B>0$ such that $|\eta_{k+1}^i|\leq B(k+2)^{-\rho}$. Therefore, we can bound \eqref{eq:ooY} from below by
\be
\oY_{k+1}^i \geq \bigg(1-\frac{1-\gamma}{k+2}\bigg)\oY_k^i - \frac{B}{(k+2)^{\rho+1}}.
\ee

Consider the auxiliary sequence $\theta_k$ evolving by
\be\label{eq:thetaB}
\theta_{k+1} = \bigg(1-\frac{1-\gamma}{k+2}\bigg)\theta_k - \frac{B}{(k+2)^{\rho+1}}\quad\forall k\geq 0\quad\mbox{and}\quad \theta_0 = \oY_0^i.
\ee
Then, we have $\theta_k\leq \oY_k^i$ for all $k$. Let
\be
\Lambda_{k,l}= \prod_{t=l}^{k-1}\left(1-\frac{1-\gamma}{t+2}\right)\quad\mbox{for } k>l\quad\mbox{and}\quad \Lambda_{k,k}=1.
\ee
Unrolling \eqref{eq:thetaB} gives
\be\label{eq:thetaC}
\theta_k = \Lambda_{k,0}\theta_0 - \sum_{l=0}^{k-1}\Lambda_{k,l+1}\frac{B}{(l+2)^{\rho+1}}.
\ee
Taking logarithm of $\Lambda_{k,l}$ and comparing $\sum\log(1-u)$ to $-\sum u$ and $\int u$ for $u\in(0,1)$, 
we can show that $0 \leq \Lambda_{k,l}\leq \left(\frac{l+2}{k+2}\right)^{1-\gamma}$ for all $k\geq l$. Therefore, by \eqref{eq:thetaC}, we obtain
\begin{flalign}
\theta_k 
&\geq -\frac{c}{(k+2)^{1-\gamma}} \left(1 + \sum_{l=0}^{k-1}\frac{1}{(l+2)^{\rho+\gamma}}\right)
\geq -\frac{c}{(k+2)^{1-\gamma}} \left(1+\int_0^k\frac{dl}{(l+1)^{\rho+\gamma}}\right)\nn\\
&\geq\left\{\begin{array}{ll}
-\frac{c}{1-(\rho+\gamma)} \frac{1}{(k+2)^{\rho}}&\mbox{if } \rho+\gamma < 1\\
-c \frac{1+\log(k+1)}{(k+2)^{1-\gamma}}&\mbox{if }\rho+\gamma = 1\\
-\frac{c(\rho+\gamma)}{(\rho+\gamma)-1}\frac{1}{(k+2)^{1-\gamma}}&\mbox{if } \rho+\gamma > 1
\end{array}\right.
\end{flalign}
for some constant $c\geq\max\{|\theta_0|2^{1-\gamma},B(3/2)^{1-\gamma}\}>0$. This completes the proof.
\end{proof}

By \eqref{eq:uY}, we have $\uY_{k+1}^i \geq \gamma \oY_k^i + \eta_{k+1}^i$ while $\oY_k^i\in O(k^{-\min\{1-\gamma,\rho_\alpha+\rho_\lambda-1.5\}} \log k)$ and $\eta_k^i \in O(k^{-(\rho_\alpha+\rho_\lambda-1.5)})$. This implies that there exists some constant $C>0$ such that
\be
\uY_{k}^i\geq -C \frac{\log(k+1)}{(k+1)^{a}},
\ee
where $a := \min\{1-\gamma,\rho_\alpha+\rho_\lambda-1.5\}$ is positive by \Cref{assume:additional} and since $\gamma\in[0,1)$.
Correspondingly, we have
\begin{flalign}
\sum_{k=k_1}^{k_2-1}\frac{\uY_{k}^i}{k+1} \geq - C \sum_{k=k_1}^{k_2-1}\frac{\log(k+1)}{(k+1)^{1+a}} \geq - C \sum_{k=k_1}^{\infty}\frac{\log(k+1)}{(k+1)^{1+a}}.
\end{flalign}
As $a>0$, the series $\sum_{k=k_1}^{\infty}\frac{\log(k+1)}{(k+1)^{1+a}}<\infty$ is convergent. Therefore, as $k_1\rightarrow\infty$, its tail goes to zero and we obtain
\be
\liminf_{k\rightarrow \infty} \inf_{k_2\geq k_1\geq k} \sum_{l =k_1}^{k_2-1} \frac{Y_l^i(s,a)}{l+1} \geq \liminf_{k\rightarrow \infty} \inf_{k_2\geq k_1\geq k} \sum_{l =k_1}^{k_2-1} \frac{\uY_l^i}{l+1}\geq 0.
\ee
This yields that the global Q-values are almost surely convergent. 

\subsubsection{Limit Characterization}
 Since the global Q-values are almost surely convergent, let $Q_*^i(s,a)$ denote the limit, i.e., $Q_k^i(s,a)\rightarrow Q_*^i(s,a)$ almost surely for each $(i,s,a)$. Then, we can write the local q-function estimate as
\begin{flalign}
q_k^i(s,\cdot) &= Q_*^i(s,\cdot)\pi_k^{-i}(s) + (Q_k^i(s,\cdot)-Q_*^i(s,\cdot))\pi_k^{-i}(s) + \delta_k^i(s)\\
&= \overline{Q}_*^i(s,\cdot)\mu_k^{-i}(s) + (Q_k^i(s,\cdot)-Q_*^i(s,\cdot))\pi_k^{-i}(s) + \delta_k^i(s),
\end{flalign}
where the effective global Q-function in the limit is defined by
\be
 \overline{Q}_*^i(s,\cdot) := (1-\epsilon) Q_*^i(s,\cdot) + \frac{\epsilon}{|\Action^{-i}|}Q_*^i(s,\cdot)\mathbf{1}\mathbf{1}^T
\ee
since the exploration-perturbed actor is given by $\pi_k^{i}(s) = (1-\epsilon)\mu_k^i(s) + \epsilon \frac{\mathbf{1}}{|\Action^i|}$. Therefore, the evolution of the exploration-free actor, as described in \eqref{eq:actormu}, can be written in the generalized-fictitious-play structure \citep{ref:Leslie06} as
\be\label{eq:mu}
\mu_{k+1}^i(s) - (1-\alpha_k)\cdot \mu_k^i(s) \in \alpha_k\cdot \overline{\mathrm{BR}}_k(\mu_k^{-i}(s)), 
\ee
where
\begin{flalign}
\overline{\mathrm{BR}}_k(\mu^{-i}) := \{\mu^i\in\Delta(\Action^i): &\;(\mu^i)^T\overline{Q}_*^i(s,\cdot)\mu^{-i} \geq (\tilde{\mu}^i)^T\overline{Q}_*^i(s,\cdot)\mu^{-i} - \overline{z}_k^i(s),\nn\\
&\forall \tilde{\mu}^i\in\Delta(\Action^i)\}
\end{flalign}
for some $\overline{z}_k^i(s)$ decaying to zero almost surely. Different from \eqref{eq:muZS}, here we have $\overline{Q}_*^i$ rather than $\overline{Q}_k^i$.

By \citep[Theorem 4]{ref:Leslie06}, the limit set of \eqref{eq:mu} is contained in the connected internally chain-recurrent set of the differential inclusion
\be\label{eq:mucont}
\dot{\mu}^i(s) + \mu^i(s) \in \argmax_{\mu^i\in\Delta(\Action^i)} \{(\mu^i)^T\overline{Q}_*^i(s,\cdot)\mu^{-i}\}, 
\ee
which is the continuous-time best response dynamics in an identical-interest game with the payoff $Q_*^i(s,\cdot)=Q_*^{-i}(s,\cdot)$. Therefore, \citep[Theorem 5.5 and Remark 5.6]{ref:Benaim05} yield that $(\mu^i_k(s))_{i\in [n]}$ converge to an equilibrium of the game characterized by $(\Action^i,Q_*^i(s,\cdot))_{i\in[n]}$ almost surely. Given the convergence of the exploration-free actor, the exploration-perturbed actor is also convergent and, therefore, the value function estimates converge to the values of these actor strategies. Finally, we obtain \eqref{eq:NGP} by \Cref{prop:NG}. This completes the proof of \Cref{thm:mainII}.

\section{Illustrative Examples}\label{sec:examples}
In this section, we provide numerical simulations demonstrating the performance of the ADC dynamics and compare its performance with the other state-of-the-art algorithms in the literature. In particular, we compare \Cref{alg:ADC} with the decentralized Q-learning dynamics (Dec-Q) \citep{ref:Sayin21} in two-agent zero-sum stochastic games and with the independent and decentralized learning dynamics (IndDec) \citep{ref:Maheshwari22} in multi-agent identical-interest stochastic games.

\begin{figure}[t]
  \centering
  \begin{subfigure}{0.48\textwidth}
      \centering
      \includegraphics[width=\linewidth]{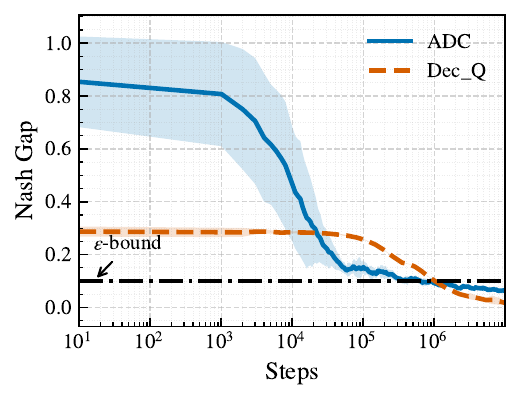}
      \caption{Zero-sum stochastic games.}
      \label{fig:left}
  \end{subfigure}
  \hfill
  \begin{subfigure}{0.48\textwidth}
      \centering
      \includegraphics[width=\linewidth]{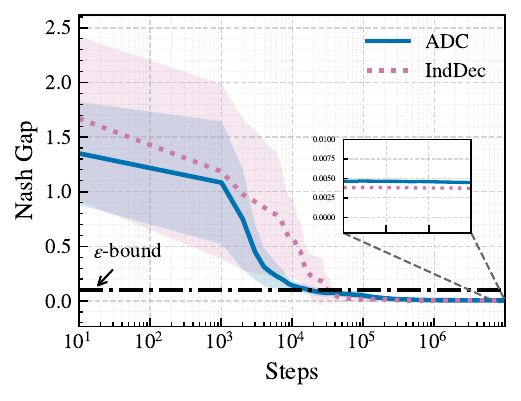}
      \caption{Identical-interest stochastic games.}
      \label{fig:right}
  \end{subfigure}
  \caption{The Nash-gap decay for our ADC, the Dec-Q, and the IndDec dynamics. The dashed line is the $\varepsilon$-threshold \eqref{eq:threshold}. The Nash gap for \Cref{alg:ADC} decays effectively below this threshold for both zero-sum and identical-interest stochastic games.}
  \label{fig:experiment}
\end{figure}

For two-agent zero-sum stochastic games, we consider three states and three actions for each agent. The rewards are generated randomly and bounded by $[0,1]$. The discount factor is $\gamma = 0.8$. For the ADC dynamics, we set step sizes $\lambda_k = (k+1)^{-0.60}$, $\alpha_k = (k+1)^{-0.95}$, $\beta_k = \min(1,10(k+1)^{-1})\in O((k+1)^{-1})$, and an exploration parameter $\epsilon=0.002$. For the Dec-Q dynamics, the step sizes for Q-function and value-function updates are set $\alpha_k = (k+1)^{-0.95}$ and $\beta_k = (k+1)^{-1}$, respectively, with a temperature parameter decaying by $\tau_k =  \tau_o/k + \overline{\tau} (1-1/k)$. Here, the initial temperature parameter $\tau_o = 10^4$ and final temperature parameter is $\overline{\tau}=2 \times 10^{-4}$. 

For multi-agent identical-interest stochastic games, we consider three states and three agents, with each agent having three actions. The rewards are again generated randomly and bounded by $[0,1]$, and the discount factor is $\gamma = 0.8$ . For the ADC dynamics, we use the same step sizes with the zero-sum case. For the IndDec dynamics, we set step sizes $\alpha_k = (k+1)^{-0.60}$ for the Q-function update, and $\beta_k = (k+1)^{-1}$ for the value-function update. Both algorithms share an exploration parameter $\epsilon=0.002$.

We conducted the experiments over $10^7$ time steps, averaging the results across $10$ independent trials. \Cref{fig:experiment} illustrates the mean and standard deviation of the Nash gap, as described in \eqref{eq:gap}. Since there is no actor in the Dec-Q dynamics, we calculate the Nash gap based on the agents' average play. In all these dynamics, the Nash gap decreases over time, indicating convergence to equilibrium. Furthermore, the ADC algorithm demonstrates comparable performance to the state-of-the-art Dec-Q and IndDec algorithms, with the Nash gap decaying below the $\varepsilon$-threshold in both zero-sum and identical-interest games, as shown in \Cref{thm:mainZS,thm:mainII}.

\section{Concluding Remarks}\label{sec:conclusion}
We introduced a new Actor-Dual-Critic learning framework for stochastic games. In this framework, inspired by dual-process cognition, agents maintain a fast critic for immediate payoff response and a slow critic for long-term value approximation. 
We proved that this independent and payoff-based learning dynamic converges to approximate Nash equilibria in both two-agent zero-sum and multi-agent identical-interest stochastic games over an infinite horizon. Our analysis further characterizes the approximation error in terms of the exploration rate. We further conducted numerical experiments demonstrating the effective convergence of the framework across both game classes. Future research could explore extending this dual-critic architecture to broader classes of general-sum games or integrating function approximation to handle large-scale state spaces.

\end{spacing}
\newpage
\begin{spacing}{1}
\bibliographystyle{plainnat}
\bibliography{mybib}
\end{spacing}

\end{document}